\documentclass{article}

\usepackage[final, nonatbib]{neurips_2025}
\usepackage[numbers]{natbib}

\usepackage{hyperref}
\hypersetup{
  colorlinks=true,       % false: boxed links; true: colored links
  linkcolor=black,       % color of internal links
  citecolor=blue,        % color of links to bibliography
  filecolor=magenta,     % color of file links
  urlcolor=blue
}
\usepackage{amsmath,amsfonts,amssymb,amsthm,commath,dsfont,nicefrac,xfrac,mathtools,mathrsfs}
\usepackage{float}
\usepackage{bm,bbm}
\usepackage[inline]{enumitem}
\usepackage{framed}
\usepackage{xspace}
\usepackage{microtype}
\usepackage{thmtools}
\usepackage{cleveref}
\Crefname{equation}{Eq.\!}{Eqs.\!}
\Crefname{assumption}{Assumption}{Assumptions}
\usepackage[dvipsnames]{xcolor}
\usepackage[toc,page]{appendix}
\usepackage{svg}
\usepackage{booktabs}       % professional-quality tables
\usepackage{stmaryrd}       % \llbracket and \rrbracket
\usepackage{mathtools}      % \splitfrac
\usepackage{algorithm}
\usepackage{algpseudocode}

\definecolor{mygreen}{RGB}{113, 201, 149}
\definecolor{mypink}{RGB}{225, 106, 212}

\theoremstyle{plain}
\newtheorem{theorem}{Theorem}[section]
\newtheorem{proposition}[theorem]{Proposition}
\newtheorem{lemma}[theorem]{Lemma}

\theoremstyle{plain}
\newtheorem{definition}[theorem]{Definition}
\newtheorem{assumption}[theorem]{Assumption}

\theoremstyle{plain}
\newtheorem{remark}[theorem]{Remark}

\usepackage{graphicx}
\usepackage{wrapfig}

\newcommand{\hp}[1]{{\color{blue!50!black}\fontsize{9pt}{10pt}\selectfont\textsf{#1}}}

\renewcommand{\parallel}{\mathrel{/\mkern-5mu/}}
\makeatletter
\newcommand{\notparallel}{%
  \mathrel{\mathpalette\not@parallel\relax}%
}
\newcommand{\not@parallel}[2]{%
  \ooalign{\reflectbox{\(\m@th#1\smallsetminus\)}\cr\hfil\(\m@th#1\parallel\)\cr}%
}
\makeatother

% Math Operator

\DeclareMathOperator*{\argmin}{arg\,min}
\DeclareMathOperator{\diag}{diag}

\newcommand{\at}[3]{\left.#1\right\vert_{#2}^{#3}}

% Notations for convenience
\newcommand{\ts}[1]{\theta_{#1}^*}
\newcommand{\E}[0]{\mathbb{E}}
\newcommand{\EA}[0]{\mathbb{E}_{A \sim D_a}}

\newcommand{\VarA}[0]{\mathrm{Var}_{A \sim D_a}}
\newcommand{\dfzt}[1]{\nabla_\theta f(z', \ts{#1})}

\newcommand{\gsl}[1]{(F_S + \lambda I_p)^{#1}}

\usepackage[utf8]{inputenc} % allow utf-8 input
\usepackage[T1]{fontenc}    % use 8-bit T1 fonts
\usepackage{hyperref}       % hyperlinks
\usepackage{url}            % simple URL typesetting
\usepackage{booktabs}       % professional-quality tables
\usepackage{amsfonts}       % blackboard math symbols
\usepackage{nicefrac}       % compact symbols for 1/2, etc.
\usepackage{microtype}      % microtypography
\usepackage{xcolor}         % colors
\usepackage{subcaption}
\usepackage{multirow}

\title{Taming Hyperparameter Sensitivity in Data Attribution:  Practical Selection Without Costly Retraining}

\author{
  Weiyi Wang$^1$ \quad Junwei Deng$^2$ \quad Yuzheng Hu$^2$ \quad Shiyuan Zhang$^2$ \quad Xirui Jiang$^1$ \\
  \textbf{Runting Zhang}$^1$ \quad \textbf{Han Zhao}$^2$ \quad \textbf{Jiaqi W. Ma}$^2$ \\
  $^1$University of Michigan Ann Arbor \quad $^2$University of Illinois Urbana-Champaign \\
  \texttt{\{wweiyi,xirui,runtingz\}@umich.edu} \\
  \texttt{\{junweid2,yh46,sz54,hanzhao,jiaqima\}@illinois.edu}
}

\begin{document}

\maketitle

\begin{abstract}
Data attribution methods, which quantify the influence of individual training data points on a machine learning model, have gained increasing popularity in data-centric applications in modern AI. Despite a recent surge of new methods developed in this space, the impact of hyperparameter tuning in these methods remains under-explored. In this work, we present the first large-scale empirical study to understand the hyperparameter sensitivity of common data attribution methods. Our results show that most methods are indeed sensitive to certain key hyperparameters. However, unlike typical machine learning algorithms---whose hyperparameters can be tuned using computationally-cheap validation metrics---evaluating data attribution performance often requires retraining models on subsets of training data, making such metrics prohibitively costly for hyperparameter tuning. This poses a critical open challenge for the practical application of data attribution methods. To address this challenge, we advocate for better theoretical understandings of hyperparameter behavior to inform efficient tuning strategies. As a case study, we provide a theoretical analysis of the regularization term that is critical in many variants of influence function methods. Building on this analysis, we propose a lightweight procedure for selecting the regularization value without model retraining, and validate its effectiveness across a range of standard data attribution benchmarks. Overall, our study identifies a fundamental yet overlooked challenge in the practical application of data attribution, and highlights the importance of careful discussion on hyperparameter selection in future method development.
\end{abstract}

\section{Introduction}
\label{sec:intro}
Data attribution methods quantify the contribution of individual training samples to a machine learning model’s predictions or overall performance~\citep{koh2017understanding,deng2025survey}. These methods have been successfully applied to a range of data-centric tasks in modern AI, including training data selection~\citep{xia2024less}, mislabel detection~\citep{koh2017understanding}, and copyright compensation~\citep{deng2023computational}.  Recently, this field has witnessed a surge of novel methods~\citep{ghorbani2019data,jia2019towards,pruthi2020estimating,grosse2023studying,park2023trak,choe2024your,bae2024training,wang2025data}, each promising improvements in efficiency, efficacy, or theoretical soundness.

Despite this rapid progress, a crucial practical aspect remains largely underexplored: the \textbf{hyperparameter sensitivity} of data attribution methods. Existing benchmark studies~\citep{jiang2023opendataval,deng2024dattri} largely adopt the hyperparameter settings used in the original papers proposing the methods. While some of the original papers include limited ablation studies, few of them offer practical guidance for hyperparameter selection, and many methods include implicit tunable knobs that have received little scrutiny. Moreover, the optimal hyperparameter choices can vary across datasets and models.

Hyperparameter tuning is particularly challenging in data attribution due to the high computation costs associated with common evaluation metrics.
Standard evaluation metrics---such as Leave‑One‑Out (LOO) correlation~\citep{koh2017understanding} and Linear Datamodeling Score (LDS)~\citep{park2023trak}---require retraining models on subsets of the full training dataset for hundreds or even thousands of times. A conventional hyperparameter search under such evaluation metrics is prohibitively expensive in practice.

In this work, we present the first systematic, large-scale empirical study of hyperparameter sensitivity in data attribution. We benchmark a suite of popular data attribution methods across various datasets and models with extensive hyperparameter sweeps. Our results confirm that most methods are sensitive to certain key hyperparameters, and the optimal hyperparameter choices vary across experimental settings. Our analysis also reveals other interesting novel findings regarding the \textit{interaction} of different hyperparameters. Our findings highlight hyperparameter tuning as a critical open challenge in the practical application of data attribution.

As a first step towards addressing this challenge, we conduct a case study on selecting the regularization term---a crucial hyperparamter in many influence function methods~\citep{koh2017understanding,grosse2023studying}---without costly retraining. We first provide a theoretical analysis of how the regularization term affects the influence function performance in terms of LDS. Building on this analysis, we introduce a surrogate indicator that enables selection of the regularization value without model retraining. We validate the effectiveness of the proposed approach across multiple experimental settings, including several variants of influence functions on different datasets and models.

To summarize, our contributions are as follows:
\begin{itemize}[leftmargin=*]
    \item We identify hyperparameter tuning as a critical but largely overlooked challenge in the practical application of data attribution methods.
    \item We conduct the first comprehensive empirical study of hyperparameter sensitivity in data attribution, benchmarking a range of widely used methods across diverse settings, confirming the necessity of practical hyperparameter selection strategies.
    \item We provide a theoretical analysis of the regularization term in influence functions and propose an efficient, retraining-free selection procedure with strong empirical performance.
\end{itemize}
These results advance both the understanding and practical deployment of data attribution methods, encouraging a more systematic treatment of hyperparameters as integral components in future methodological development.

\section{Background}
\label{sec:related}

In this section, we provide the necessary background to contextualize our empirical study of hyperparameter sensitivity in \Cref{sec:hp-study} and the subsequent case analysis in \Cref{sec:reg}.

\subsection{Data Attribution Methods and Their Hyperparameters}
\label{sec:related:hp}

In this work, we focus on the methods included in a recent benchmark study~\citep{deng2024dattri}, which consists of state-of-the-art efficient data attribution methods while omitting a family of game-theoretic methods such as Data Shapley~\citep{ghorbani2019data,jia2019towards}.\footnote{While these game-theoretic methods are theoretically principled, they are typically computationally expensive and cannot scale up to even moderately large models.} In the following, we introduce the notation, then review a set of most  widely-used data attribution methods as well as their major hyperparameters examined in our study.

\paragraph{Notation.} Consider a multi-class classification problem with dataset $S = \{(x_i, y_i)\}_{i=1}^n \subset \mathcal{Z}$, where $\mathcal{Z} = \mathcal{X} \times \mathcal{Y}$, inputs $x_i \in \mathcal{X} \subset \mathbb{R}^d$, and labels $y_i \in \mathcal{Y}$. We use a parameterized hypothesis function $h(\cdot, \theta): \mathcal{X} \rightarrow \Delta_{|\mathcal{Y}|-1}$ with parameters $\theta \in \mathbb{R}^p$, which typically outputs class probabilities via a softmax layer. The cross-entropy loss for a single example $z=(x,y)$ is denoted by $L(z, \theta) = \ell(h(x, \theta), y)$. The empirical risk at $\theta$ over a subset $A \subset S$ of size $a$ is $R_A(\theta) := \frac{1}{a} \sum_{z\in A}L(z, \theta)$.
For a given test example $z'=(x', y')$, following~\citet{park2023trak}, we define the model output function as $f(z', \theta) := \ln\frac{p(z', \theta)}{1-p(z', \theta)}$, where $p(z', \theta)=\exp(-L(z', \theta))$ is the model's predicted probability for the correct class $y'$. We denote the parameter that minimizes $R_S$ as $\ts{S}$.

\paragraph{Data attribution methods.} A data attribution method (or an \textit{attributor}) is a function $\tau : \mathcal{Z} \times S \rightarrow \mathbb{R}$ that, for a given test example $z' \in \mathcal{Z}$, assigns $\tau(z', z_i)$ to each training point $z_i \in S$. 
A popular attributor is the Influence Function (IF) attributor, introduced by \citet{koh2017understanding}, which computes the influence of a training example $z_i \in S$ on a test example $z' \in \mathcal{Z}$ as 
\begin{equation}
\tau_{\mathrm{IF}}(z', z_i) := -\dfzt{S}^\top H_S^{-1} \nabla_\theta L(z_i, \ts{S}),
\end{equation}
where $H_S := \nabla_\theta^2 R_S(\ts{S})$ is the Hessian of the empirical risk $R_S$. To bypass the prohibitive computation of $H_S$, they also propose using efficient approaches to approximate $H_S^{-1} \nabla_\theta L(z_i, \ts{S})$, including conjugate gradient (CG) \citep{martens2010deep} and stochastic estimation method LiSSA \citep{agarwal2017second}. For non-convex models, $H_S$ is usually replaced with its \emph{regularized} version $H_S + \lambda I_p$ in case of singularity~\citep{koh2017understanding}, where $\lambda > 0$ is a regularization term and $I_p$ is the identity matrix of size $p$.
To further handle non-convexity and reduce computational costs, \citet{grosse2023studying} draw from second-order optimization literature~\citep{martens2020new} and introduce the Generalized Gauss Newton matrix (GGN) and empirical Fisher Information Matrix (FIM) to approximate $H_S$, which are guaranteed to be positive semi-definite (or even positive definite when a small regularization $\lambda I_p$ is included), and only involve first-order differentiation. We formally define the Influence Function attributor with FIM (IFFIM) as
\begin{equation}
    \tau_{\mathrm{IFFIM}, \lambda}(z', z_i) := -\nabla_\theta f(z', \ts{S})^\top (F_S + \lambda I_p)^{-1}\nabla_\theta L(z_i, \ts{S}),
    \label{eq:def-iffim}
\end{equation}
where $F_S$ is the empirical FIM defined as: 
\begin{equation}
    F_S := \frac{1}{n} \sum_{z_i \in S}[\nabla_\theta L(z_i, \ts{S})\nabla_\theta L(z_i, \ts{S})^\top] = \E_{z_i \sim S}[\nabla_\theta L(z_i, \ts{S})\nabla_\theta L(z_i, \ts{S})^\top].
    \label{eq:Gs}
\end{equation}
The TRAK~\citep{park2023trak} and LoGra~\citep{choe2024your} attributors adopt similar forms\footnote{Appendix \ref{appendix:theory:relationship-trak-iffim} provides a detailed discussion on the relationship between TRAK and IFFIM.} while further introducing \emph{gradient projections} to accelerate computation. In this case, we have the following modified IFFIM attributor, which takes a projection matrix $P \in \mathbb{R}^{p \times \tilde{p}}$ with projection dimension $\tilde{p}$ into consideration:
\begin{equation}
    \tau_{\mathrm{IFFIM}, \lambda, P}(z', z_i) := -\dfzt{S}^\top P (P^\top F_S P + \lambda I_{\tilde{p}})^{-1} P^\top\nabla_\theta L(z_i, \ts{S}).
    \label{eq:def-iffim-rp}
\end{equation}
Another popular method, TracIn~\citep{pruthi2020estimating},  computes the influence of $z_i$ on $z'$ by tracing the entire optimization process based on stochastic gradient descent. Mathematically, it works as if substituting $H_S^{-1}$ with the identity matrix, while taking average across training model checkpoints weighted by the learning rate:
$ \tau_{\mathrm{TracIn}}(z', z_i) := \sum_{t} \eta_t\nabla_\theta f(z', \theta_t)^\top\nabla_\theta L(z_i, \theta_t),$ 
where $\theta_t$ and $\eta_t$ are the parameters and learning rate at checkpoint $t$.

\paragraph{Major hyperparameters.} We highlight some major hyperparameters that are shared across multiple data attribution methods mentioned above. 
\begin{itemize}[leftmargin=*]
    \item Regularization term $\lambda$: the regularization term $\lambda$ for handling non-convexity is a common hyperparameter across many methods, including IF~\citep{koh2017understanding}, TRAK~\citep{park2023trak}, LoGra~\citep{choe2024your}, etc.
    \item Projection dimension $\tilde{p}$: gradient projection is a common practice to reduce computational costs in data attribution methods~\citep{schioppa2022scaling, choe2024your, park2023trak}, making the projection dimension $\tilde{p}$ another common hyperparameter.
    \item Training epoch: while, in principle, most data attribution methods rely on the optimal model parameters $\ts{S}$ to calculate the attribution scores, one can also calculate the attribution scores using an earlier model checkpoint $\theta_t$ in practice. This makes the \emph{training epoch} of the checkpoint used to calculate the attribution scores an implicit hyperparameter, whose impact is largely overlooked in the literature.\footnote{TracIn~\citep{pruthi2020estimating} explicitly uses multiple checkpoints as shown in the definition of $\tau_{\mathrm{TracIn}}$ while TRAK~\citep{park2023trak} also explores the effect of aggregating over different checkpoints in an ablation study.}
\end{itemize}

\subsection{Evaluation Metrics for Data Attribution}

Evaluation metrics for data attribution can be categorized into application-agnostic and application-specific metrics. 

The application-agnostic metrics include Leave-One-Out (LOO) correlation~\citep{koh2017understanding}, Linear Datamodeling Score (LDS)~\citep{park2023trak}, and Brittleness~\citep{ilyas2022datamodels}. In general, these metrics measure the performance of data attribution methods by their ability to predict the model output when training on subsets of the training data. As a result, they typically require hundreds or even thousands of model retraining on subsets of the training dataset. While these metrics are widely used as principled evaluation criteria for novel data attribution methods in research setup, they are prohibitively expensive for hyperparameter tuning in practical applications. 

The application-specific metrics rooted in individual downstream application scenarios. Two common scenarios include noisy label detection and data selection~\citep{koh2017understanding,ghorbani2019data}. While these metrics are typically less computationally demanding, they are restricted to specific applications and only provide a partial picture of the quality of data attribution. Hyperparameter tuning in data selection also relies on retraining the model on the selected data subset, which can be expensive for large-scale applications such as large language models~\citep{xia2024less}.

In this work, we focus on LDS, which is one of the most widely used evaluation metrics in recent literature~\citep{park2023trak,choe2024your,deng2024dattri}, for both the hyperparameter sensitivity study and the case study. Formally, LDS is defined as following.
\begin{definition}[Linear Datamodeling Score (LDS)~\citep{park2023trak}]\label{def:lds}
Given an attributor $\tau$ and a test example $z' \in \mathcal{Z}$, sample $s$ subsets $\overline{A} = \{A_1, \cdots, A_s\}$, where each $A_j \subset S$ is sampled uniformly at random with fixed size $a$. The \emph{Linear Datamodeling Score (LDS)} of $\tau$ for $z'$ on $\overline{A}$ is defined as:
\begin{equation}
c_s(\tau, z', \overline{A}) := \mathrm{SpearmanCorr}\bigg(\{f(z', \theta^*_{A_j})\}_{j \in [s]}, \big\{\sum_{z \in A_j}\tau(z', z)\big\}_{j \in [s]}\bigg),
\label{eq:def-lds}
\end{equation}
where $\theta^*_{A_j}$ is the optimal model parameters learned on a subset $A_j$, the subset influence is modeled as additive in individual influences~\emph{\citep{hu2024most}}, 
and $\mathrm{SpearmanCorr}$ is the Spearman correlation~\emph{\citep{spearman1904proof}}.
\end{definition}

\subsection{Related Work on Hyperparameter Sensitivity in Data Attribution}

Finally, we review existing literature with dedicated discussions on the effect of hyperparameters in data attribution. \citet{koh2019accuracy} examine the change in attribution quality when the regularization strength $\lambda$ is tuned in IF settings. They empirically observe an enhancement of the correlation between the attributed influences and actual effects when $\lambda$ is increased, and provide an upper bound on the approximation error of IF which decreases with $\lambda$.
\citet{basu2021influence} empirically study the effects of neural network architectures and weight-decay during model training on IF attribution quality. They reveal an increase in error when the network is deep or weight-decay is absent. \citet{zhang2022rethinking} utilize techniques from Neural Tangent Kernel \citep{jacot2018neural} to provide a theoretical lower bound of the LOO counterfactual loss change for wide two layer ReLU networks with regularized mean-square loss. \citet{klochkov2024revisiting} leverage the spectral properties of Hessian to guide selection of hyperparameters specific in LiSSA. However, to our knowledge, this work is the first systematic, large-scale study on the hyperparameter sensitivity in data attribution that points out the unique computational challenge for hyperparameter tuning in practice.

\section{A Large-Scale Study of Hyperparameter Sensitivity in Data Attribution}
\label{sec:hp-study}

\subsection{Experimental Setup}

\paragraph{Data attribution methods.} We evaluate a range of data attribution methods, including IF and its variants (CG and LiSSA)~\citep{koh2017understanding}, TracIn~\citep{pruthi2020estimating}, TRAK~\citep{park2023trak}, and LoGra~\citep{choe2024your}, introduced in~\Cref{sec:related:hp}.

\paragraph{Datasets and models.} We perform empirical evaluations on three standard benchmark settings: MNIST~\citep{lecun1998gradient} with a multilayer perceptron (MLP), CIFAR-2~\citep{krizhevsky2009learning}
with ResNet-9~\citep{he2016deep}, and WikiText2~\citep{merity2016pointer} with the GPT2~\citep{radford2019language}, 
which consists of both image and text data with various model sizes. We use the implementation from the \texttt{dattri} library~\citep{deng2024dattri} for the datasets and models.

\paragraph{Experiment design.}
We carry out the comprehensive study through multiple sub-experiments. For each sub-experiment, we analyze the sensitivity of one or two hyperparameters while controlling other hyperparameters to be fixed. We perform a grid search over the chosen hyperparameters within the search space. We calculate LDS for each experiment under 50 independently retrained models.

\paragraph{Hyperparameter selection and search space.}
We consider both the common hyperparameters introduced in \Cref{sec:related:hp} and some critical method-specific hyperparameters. For TRAK, we experiment with \hp{regularization}, \hp{projection-dimension}, and \hp{training-epoch}. For TracIn, we search for \hp{projection-dimension}, \hp{normalization}, and \hp{checkpoint-selection}. For IF, we analyze \hp{regularization} and \hp{training-epoch}, as well as \hp{max-iteration} for the CG variant, and \hp{scaling} and \hp{recursion-depth} for the LiSSA variant. For LoGra, we search for \hp{regularization}, \hp{projection-dimension}, and \hp{training-epoch}. We design the search space for each hyperparameter around its default value proposed by the original papers. The detailed definitions and the search space of the hyperparameters are stated in Appendix~\ref{sappendix:hp-study-hp-definitions}.

\subsection{Experimental Results}
We summarize our findings and insights from the experimental results (Figure~\ref{fig:comprehensive-study-sensitivity})\footnote{Some additional results are shown in~\Cref{sappendix:hp-study-additional-results}.} as follows:

\begin{figure}[h]
  \centering
  \begin{subfigure}[t]{0.32\textwidth}
\includegraphics[width=\linewidth]{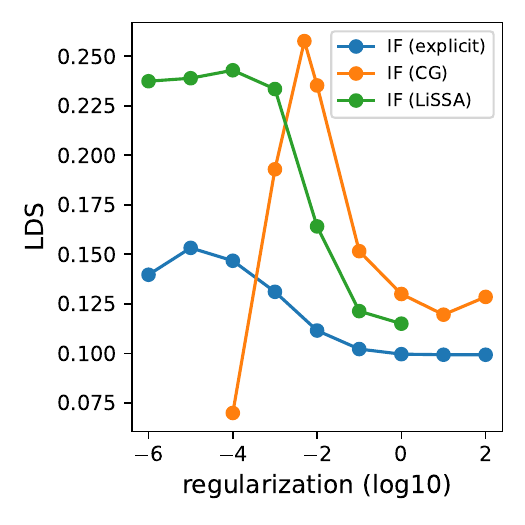}
\caption{MNIST+MLP. Attributor: IF\footnotemark. HP: \hp{regularization}.}~\label{subfig:if-reg}
  \end{subfigure}
  \hfill
  \begin{subfigure}[t]{0.32\textwidth}
  \includegraphics[width=\linewidth]{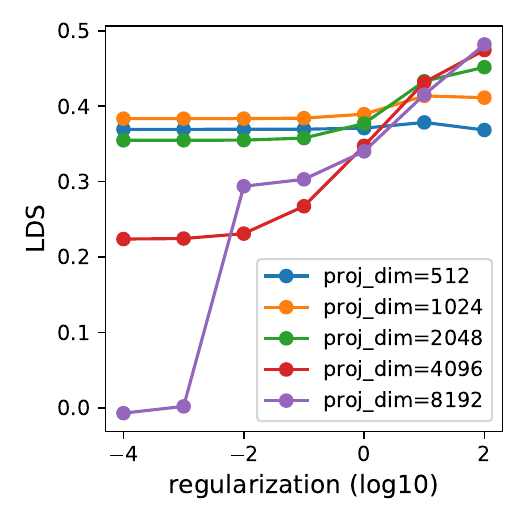}
    \caption{MNIST+MLP. Attributor: TRAK. HP: \hp{projection-dimension}, \\\hp{regularization}.}\label{subfig:trak-mnist-reg-proj}
  \end{subfigure}
  \hfill
  \begin{subfigure}[t]{0.32\textwidth}
    \includegraphics[width=\linewidth]{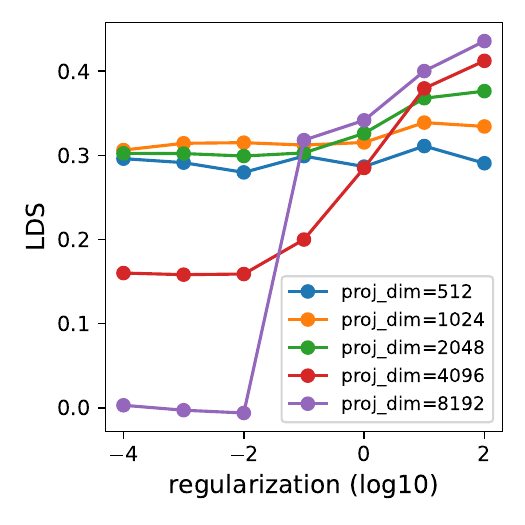}
    \caption{CIFAR-2+ResNet-9. Attributor: TRAK. HP: \hp{projection-dimension}, \hp{regularization}.}\label{subfig:trak-cifar-reg-proj}
  \end{subfigure}\\
  \begin{subfigure}[t]{0.32\textwidth}
    \includegraphics[width=\linewidth]{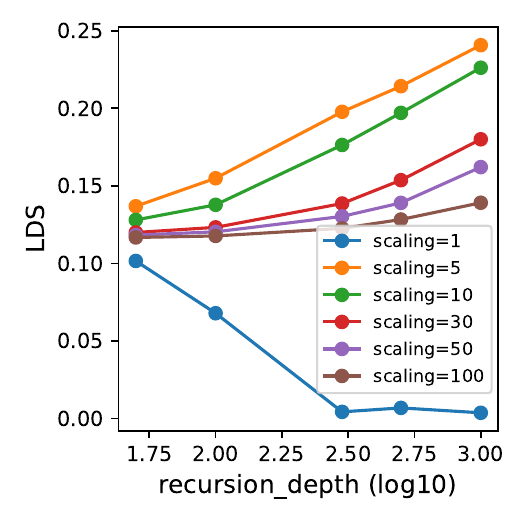}
    \caption{MNIST+MLP. Attributor: IF (LiSSA). HP: \hp{scaling}, \hp{recursion-depth}.}
  \end{subfigure}
  \hfill
  \begin{subfigure}[t]{0.32\textwidth}
    \includegraphics[width=\linewidth]{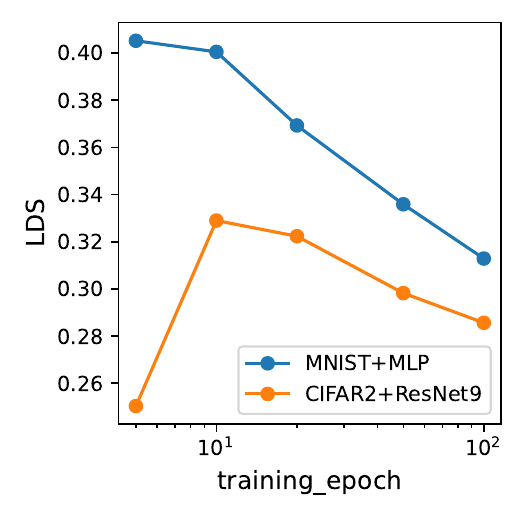}
    \caption{CIFAR-2+ResNet-9 \& MNIST+ MLP. Attributor: TRAK. HP: \hp{training-epoch}.}~\label{subfig:trak-epoch}
  \end{subfigure}
  \hfill
  \begin{subfigure}[t]{0.32\textwidth}
    \includegraphics[width=\linewidth]{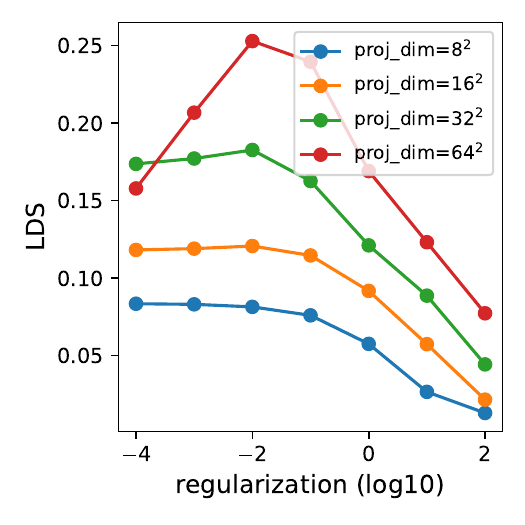}
    \caption{WikiText2+GPT2. Attributor: LoGra. HP: \hp{projection-dimension}, \hp{regularization}.}\label{subfig:logra-reg-proj}
  \end{subfigure}
  \caption{Part of the experimental results of hyperparameter sensitivity analysis.}~\label{fig:comprehensive-study-sensitivity}
\end{figure}
\footnotetext{IF (explicit) refers to the original definition of IF that explicitly inverts the Hessian. Only the last layer parameters are used for IF (explicit) due to the memory limit.}

\begin{itemize}[leftmargin=*]
    \item \textbf{Most attribution methods are sensitive to certain hyperparameters.} Small changes in hyperparameters can lead to large variations in LDS performance, highlighting the sensitivity of most methods to hyperparameter choices. For example, Figure~\ref{subfig:if-reg} shows that increasing \hp{regularization} from 1e-4 to 1e-2 for IF (CG) leads to a 3$\times$ increase in LDS.

    \item \textbf{The best hyperparameter choices vary across datasets and models.} A configuration that performs well for one setting (dataset and model) may not transfer to another setting, reinforcing the need for hyperparameter tuning for each specific task. For example, Figure~\ref{subfig:trak-epoch} shows that setting \hp{training-epoch} as 5 on MNIST+MLP is the best hyperparameter choice while leading to poor performance on CIFAR-2+ResNet-9. 
    
    \item \textbf{The impact of hyperparameters can be entangled.} We find that the choice of one hyperparameter can affect the sensitivity of LDS to others. As an example, the LDS of both TRAK and LoGra becomes more sensitive to \hp{regularization} for higher \hp{projection-dimension}, as depicted in Figure~\ref{subfig:trak-mnist-reg-proj}, Figure~\ref{subfig:trak-cifar-reg-proj}, and Figure~\ref{subfig:logra-reg-proj}. Intuitively, higher \hp{projection-dimension} implies that the Hessian or empirical FIM is closer to being singular, which leads to more sensitive attribution scores. Interestingly, the entanglement of \hp{regularization} with \hp{projection-dimension} addresses an \emph{open question} in TRAK~\citep{park2023trak}: it was unclear why, counter-intuitively, the LDS for TRAK decreases when \hp{projection-dimension} increases---our result suggests that this is due to an insufficient tuning of \hp{regularization}. Concretely, Figure~\ref{subfig:trak-mnist-reg-proj} and Figure~\ref{subfig:trak-cifar-reg-proj} indicate that when having a high \hp{projection-dimension} (e.g. 8192), TRAK performs poorly with low \hp{regularization}. However, a higher \hp{projection-dimension} ultimately leads to better LDS when \hp{regularization} is tuned.

    \item \textbf{Implicit hyperparameters, such as }\hp{training-epoch}\textbf{, also play an important role.} Figure~\ref{subfig:trak-epoch} illustrates the effect of a largely overlooked hyperparameter, the \hp{training-epoch}. It turns out that the \hp{training-epoch} of the model used to calculate the attribution scores could significantly affect LDS. However, it is counter-intuitive that a larger \hp{training-epoch} could lead to worse LDS, which raises an open question for future research.
\end{itemize}

\section{A Case Study on the Regularization Term in Influence Function}
\label{sec:reg}

In this section, we first analyze the problem of selecting the regularization term $\lambda$ in influence functions for maximizing LDS. Based on our analysis, we propose a surrogate indicator that instructs a practical selection algorithm for $\lambda$ without model retraining. Finally, we evaluate the proposed selection algorithm on a variety of experimental settings, demonstrating that it empirically generalizes well for multiple data attribution methods involving the regularization term.

\subsection{Problem: Hyperparameter Selection of $\lambda$ in Influence Function}

We study the problem of selecting $\lambda$ using the IFFIM version of influence function, $\tau_{\mathrm{IFFIM}, \lambda}$, as defined in \Cref{eq:def-iffim}.

\paragraph{Hyperparameter selection as an optimization problem.} Recall \Cref{def:lds}, where we have denoted LDS as $c_s(\tau, z', \overline{A})$ for an attributor $\tau$, a test sample $z'$, and samples of subsets $\overline{A}$. The hyperparameter selection problem can be formulated as the following optimization problem:
\begin{equation}
\notag
    \max_{\lambda > 0} c_s(\tau_{\mathrm{IFFIM}, \lambda}, z', \overline{A}).
\end{equation}
However, directly analyzing this optimization problem poses several technical challenges. First, the LDS objective is non-differentiable because it involves the discrete Spearman correlation. Second, the LDS objective depends on the specific sample of subsets $\overline{A}$, which is difficult to analyze. Third, evaluating the LDS requires model retraining, which we aim to avoid.

\paragraph{Relaxing the LDS objective.} To address the first two challenges, we propose the Population Pearson LDS as a more tractable proxy for LDS.
\begin{definition}[Population Pearson LDS]
For an attributor $\tau$ and a test example $z' \in \mathcal{Z}$, the \emph{Population Pearson LDS} is defined using the Pearson correlation over all subsets $A \subset S$ of size $a$:
\begin{equation}
    c_p(\tau, z') := \mathrm{PearsonCorr}\bigg(\{f(z', \theta^*_A)\}_{A \subset S, |A|=a}, \big\{\sum_{z \in A}\tau(z', z)\big\}_{A \subset S, |A|=a}\bigg).
    \label{eq:def-cp}
\end{equation}
\end{definition}
Note that Spearman correlation is equivalent to the Pearson correlation of the rank-transformed variables. Therefore, both statistics measure monotonic relationships and are generally expected to be highly correlated~\citep{field2024discovering}. The new objective, Population Pearson LDS, is differentiable and replaces the sampled subsets $\overline{A}$ with all size $a$ subsets of $S$, which makes it easier to analyze. Specifically, we can define the partial derivative of $c_p$ with respect to $\lambda$ as
\begin{equation}
    \dot{c_p}(\lambda; z') := \frac{\partial c_p(\tau_{\mathrm{IFFIM}, \lambda}, z')}{\partial \lambda}.
\end{equation}

\paragraph{Analyzing the partial derivative $\dot{c_p}(\lambda; z')$.} In our hyperparameter sensitivity experiments in \Cref{sec:hp-study}, we observe that, empirically, the LDS with respect to the regularization term $\lambda$ often has a uni-modal and concave shape. This observation turns our attention from the original maximization problem to the analysis of the partial derivative $\dot{c_p}(\lambda; z')$. We first show in the following \Cref{lemma:lds-monotonicity} that our empirical observation about the shape of the LDS as a function of $\lambda$ is not a coincidence.
\begin{lemma}[Monotonicity in Small $\lambda$; Informal]\label{lemma:lds-monotonicity}
Assuming the test sample $z'$ follows a similar distribution as the training data, under some technical assumptions deferred to \Cref{appendix:theory:proof-lemma-lds},\footnote{This involves non-trivial technical assumptions that do not always strictly hold. But we have made intuitive and empirical justifications about them.} with high probability over the choice of test sample $z'$, there exists some $C>0$ such that for $0 < \lambda < C$,
\[\dot{c_p}(\lambda; z') > 0.\]
\end{lemma}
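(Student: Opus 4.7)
The plan is to reduce $c_p(\lambda)$ to an explicit quotient of quadratic forms in $g := \nabla_\theta f(z', \theta_S^*)$ involving $F_S$, $H_S$, and $(F_S + \lambda I_p)^{-1}$, differentiate, and reduce the claim to a spectral inequality that holds with high probability over the draw of $z'$ for $\lambda$ in a small interval.

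First I would linearize both random variables inside the Pearson correlation in the aggregated gradient $G(A) := \sum_{z \in A} \nabla_\theta L(z, \theta_S^*)$. The attributor side is already linear: $Y(A;\lambda) := \sum_{z \in A} \tau_{\mathrm{IFFIM},\lambda}(z',z) = -g^\top (F_S + \lambda I_p)^{-1} G(A)$. For the retrained quantity $X(A) := f(z', \theta_A^*)$, I would Taylor-expand $f$ around $\theta_S^*$ and combine with the standard one-step Newton approximation $\theta_A^* - \theta_S^* \approx -H_S^{-1}\nabla R_A(\theta_S^*) = -\tfrac{1}{a}H_S^{-1}G(A)$ (which uses $\nabla R_S(\theta_S^*) = 0$ at the optimum), giving $X(A) \approx f(z', \theta_S^*) - \tfrac{1}{a} g^\top H_S^{-1} G(A)$. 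A direct indicator-variable computation for uniform sampling of subsets of size $a$ then yields $\mathrm{Cov}(G(A)) = \tfrac{a(n-a)}{n-1}(F_S - \mu_S \mu_S^\top)$, which collapses to a scalar multiple of $F_S$ because $\mu_S := \tfrac{1}{n}\sum_z \nabla_\theta L(z,\theta_S^*) = 0$ at $\theta_S^*$. Substituting and absorbing all $\lambda$-independent factors gives
\begin{equation*}
c_p(\lambda) \;\propto\; \frac{g^\top H_S^{-1} F_S (F_S + \lambda I_p)^{-1} g}{\sqrt{\,g^\top (F_S + \lambda I_p)^{-1} F_S (F_S + \lambda I_p)^{-1} g\,}}.
\end{equation*}

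Next I would differentiate. Writing $K(\lambda) := (F_S + \lambda I_p)^{-1}$ with $\dot K = -K^2$ and $c_p \propto N(\lambda)/\sqrt{D(\lambda)}$ for $N := g^\top H_S^{-1} F_S K g$ and $D := g^\top K F_S K g$, the sign of $\dot c_p$ coincides with the sign of $\dot N/N - \dot D/(2D)$. Passing to the eigenbasis of $F_S$ (eigenvalues $\mu_i$, coordinates $\tilde g := U^\top g$) converts this into a comparison between two weighted averages of powers of $(\mu_i + \lambda)^{-1}$, which I expect to finish by Chebyshev's sum inequality or Cauchy--Schwarz. The ``high probability over $z'$'' statement is then discharged using the hypothesis that $z'$ follows the training distribution: this lets me treat $g$ as a random vector whose covariance is comparable to $F_S$, control its projection onto the small-eigenvalue directions of $F_S$, and conclude the spectral inequality uniformly in $\lambda \in (0,C)$ by a concentration / union-bound argument.

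The main obstacle lies in the differentiation step. Under the natural ``self-consistency'' assumption $H_S = F_S$ (which holds near the optimum for well-specified models), $Y(A; 0)$ exactly recovers the ideal Newton-influence formula and makes $c_p(0) = 1$, forcing $\dot c_p(0^+) \le 0$---the \emph{opposite} of the claim. To get strict positivity the proof must exploit a concrete mismatch between $H_S$ and $F_S$: for instance, indefiniteness of $H_S$ for non-convex models, ill-conditioning or near-singularity of $F_S$ so that small $\lambda$ genuinely stabilizes the inversion, or a structural assumption that $g$ places non-negligible mass on the small-eigenvalue subspace of $F_S$. Pinning down assumptions that simultaneously (i) capture why regularization empirically helps and (ii) yield a tractable spectral inequality is the delicate step, and is almost certainly where the paper's disclaimer about ``non-trivial technical assumptions that do not always strictly hold'' takes effect.
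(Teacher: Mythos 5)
Your exact covariance computation for the aggregated gradient under uniform subset sampling reproduces the paper's \Cref{lemma:var-dl}, and your instinct that the whole argument must hinge on a quantitative ``mismatch'' is correct---but the plan as written cannot be completed, and the obstruction you flag at the end is precisely where it breaks. The decisive divergence is your linearization $f(z',\ts{A}) \approx f(z',\ts{S}) - \tfrac{1}{a}g^\top H_S^{-1}G(A)$. The paper never takes this step: it keeps the retrained outputs exact and packages their covariance with subset membership into $\alpha_{z',i} = \EA[f(z',\ts{A})\mid z_i\in A]-\EA[f(z',\ts{A})]$ and $g_{z'}=\tfrac1n J^\top\alpha_{z'}$, so that the Pearson numerator is \emph{exactly} proportional to $-\dfzt{S}^\top(F_S+\lambda I_p)^{-1}g_{z'}$ (\Cref{prop:expand-cp}), with no appeal to $H_S^{-1}$ or any Newton-step model of retraining. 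The sign analysis is then carried out not by Chebyshev's sum inequality but by a generalized Cauchy--Schwarz step (\Cref{thm:sufficient-condition-positive-derivative}) that eliminates the unobservable $g_{z'}$-dependent term in favor of $\sqrt{o_{z',\lambda}\,t_{3,z',\lambda}}$, reducing everything to the comparison in \Cref{eq:lhs-rhs}.

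The gap is the one you diagnose yourself: in your model the effective $g_{z'}$ is $-F_S H_S^{-1}g$, so under $H_S=F_S$ you get $c_p(0)=1$ and hence $\dot c_p(0^+)\le 0$. In the paper's language this is exactly the degenerate boundary of its assumptions: with $\tilde g_{z',i}\propto \tilde g_i$ and $\E_{z\sim S}[\widetilde{\nabla_\theta L}_i(z,\ts{S})^2]=\mu_i$ (\Cref{lemma:eig-df}, \Cref{assumption:concentration}), one gets $\tilde g_{z',i}^2\asymp\mu_i$, i.e.\ the case $\varepsilon_g=0$ of \Cref{assumption:bound-g}. The paper escapes by \emph{assuming} $\tilde g_{z',i}^2\le C_3\mu_i^{1+\varepsilon_g}$ with $\varepsilon_g>0$---the exact retraining response must decay strictly faster than $\mu_i$ along the small-eigenvalue directions---which, together with \Cref{assumption:alpha-J}, yields $o_{z',\lambda}\le C_3 n/\mu_{\mathrm{min}}^{1-\varepsilon_g}$ (\Cref{prop:lhs}), while the right-hand side of \Cref{eq:lhs-rhs-simp} is bounded by $O\bigl((\mu_{\mathrm{min}}+\lambda)^{3/2}/\mu_{\mathrm{min}}\bigr)$ (\Cref{prop:rhs}). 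Granting $r_{z',0^+}>0$ (assumed in \Cref{lemma:formal}, verified when $z'\in S$), the left side dominates for small $\lambda$ \emph{only when $\mu_{\mathrm{min}}$ is itself small}: the positivity is driven by near-singularity of $F_S$, not by any general monotonicity of weighted power means, and this second ingredient is also absent from your plan. So to repair your argument you would need to (i) abandon the identification of the retraining response with $H_S^{-1}$ (or at least not set $H_S=F_S$), and (ii) introduce quantitative hypotheses playing the roles of $\varepsilon_g>0$ and small $\mu_{\mathrm{min}}$; your closing paragraph correctly predicts that this is exactly where the paper's disclaimer about non-trivial technical assumptions takes effect.
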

\Cref{lemma:lds-monotonicity} suggests that for a range of small $\lambda$ starting from $0$, $c_p$ as a function of $\lambda$ is increasing. With the observation regarding the shape of the LDS curve, ideally, we can solve for the optimal hyperparameter $\lambda^*$ as the stationary point with $\dot{c_p}(\lambda^*; z') = 0$. However, the partial derivative $\dot{c_p}(\lambda^*; z')$ still involves quantities $\theta_A^*$ for $A\subset S$, which require model retraining.

\subsection{Analysis: A Sufficient Condition for Monotonicity in $\lambda$}
Instead of directly solving for the stationary point, we derive a sufficient condition (\Cref{thm:sufficient-condition-positive-derivative}) for the derivative to be positive, which leads to a surrogate indicator that does not require model retraining (see \Cref{def:surrogate-indicator} in \Cref{sec:algorithm}). Before we state this condition, we first introduce some key notations and their intuitive interpretations.

\paragraph{LOO weighted loss gradients.} \label{para:g-alpha} We let $D_a$ denote the uniform distribution of size $a$ subsets of $S$. Define $g_{z'} := \frac{1}{n}\sum_{i=1}^n \alpha_{z', i}\nabla_\theta L(z_i, \ts{S}) \in \mathbb{R}^p$, where $\alpha_{z'} \in \mathbb{R}^n$ with $\alpha_{z', i} := \EA[f(z', \ts{A})|z_i \in A] - \EA[f(z', \ts{A})]$. As the name suggests, we point out that $\alpha_{z', i}$ can be interpreted as the LOO influence of $z_i$ on $z'$ under certain conditions. Under this interpretation, $g_{z'}$ becomes the sum of loss gradients of training points weighted by their LOO influences on the test example $z'$. More details can be found in \Cref{appendix:theory:loo-weighted-loss-gradients}.

\paragraph{Other necessary notation.} \label{para:bilinear-terms} For simplicity of notation in our subsequent analysis, we let $v_{z'}$ denote the gradient $\dfzt{S}$, and define several bilinear terms of $v_{z'}$, $g_{z'}$, or $\alpha_{z'}$. They will be used to measure and compare different $\theta$ update directions (see \Cref{thm:sufficient-condition-positive-derivative} and its interpretations below). Denote $t_{k, z', \lambda}:=v_{z'}^\top (F_S + \lambda I_p)^{-k} F_S v_{z'}$. Further, we let $r_{z', \lambda} := -v_{z'}^\top(F_S+\lambda I_p)^{-1} g_{z'}$ and $o_{z', \lambda} := \alpha_{z'}^\top(JJ^\top + n\lambda I_n)^{-1}\alpha_{z'}$, where $J \in \mathbb{R}^{n \times p}$ has its $i$th row being $\nabla_\theta L(z_i, \ts{S})^\top$. The relationship between $(F_S+\lambda I_p)^{-1}$ and $(JJ^\top + n\lambda I_n)^{-1}$ is embodied in \Cref{lemma:push-through}. Intuitively, these terms define semi-inner products of $v_{z'}$, $(F_S + \lambda I_p)^{-1} v_{z'}$, $g_{z'}$ and $\alpha_{z'}$ weighted by matrices related to the FIM.

Now we are ready to state a sufficient condition for the monotonicity of Population Pearson LDS.
\begin{theorem}[Sufficient Condition for Positive Derivative]\label{thm:sufficient-condition-positive-derivative}
It suffices for $\dot{c_p}(\lambda; z') > 0$ if
    \begin{equation}
        \frac{r_{z', \lambda}}{\sqrt{o_{z', \lambda}\cdot t_{1, z', \lambda}}} > \frac{t_{2, z', \lambda}}{\sqrt{t_{3, z', \lambda} \cdot t_{1, z', \lambda}}}.
        \label{eq:lhs-rhs}
    \end{equation}
\end{theorem}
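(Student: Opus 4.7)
The plan is to compute $\dot{c_p}(\lambda;z')$ in closed form by direct differentiation of the Pearson correlation, and then to rearrange the resulting sign test into the template of~(\ref{eq:lhs-rhs}). Set $X_A := f(z',\theta_A^*)$, which does not depend on $\lambda$, and, from~\Cref{eq:def-iffim}, $Y_A(\lambda) := \sum_{z\in A}\tau_{\mathrm{IFFIM},\lambda}(z',z) = -\dfzt{S}^{\top}\gsl{-1} g_A$ with $g_A := \sum_{z\in A}\nabla_\theta L(z,\ts{S})$, whose $\lambda$-derivative is $Y_A'(\lambda) = \dfzt{S}^{\top}\gsl{-2} g_A$. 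Let $V_X := \Var(X_A)$, $V(\lambda) := \Var(Y_A(\lambda))$, and $C(\lambda) := \mathrm{Cov}(X_A,Y_A(\lambda))$, all moments taken over the uniform draw of subsets $A\subset S$ of size $a$; then $c_p(\lambda) = C(\lambda)/\sqrt{V_X\,V(\lambda)}$.

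Applying the quotient rule, together with $V'(\lambda) = 2\,\mathrm{Cov}(Y_A(\lambda),Y_A'(\lambda))$ and $C'(\lambda)=\mathrm{Cov}(X_A,Y_A'(\lambda))$, yields
\[
\dot{c_p}(\lambda;z') \;=\; \frac{1}{\sqrt{V_X\,V(\lambda)}}\!\left[\mathrm{Cov}(X_A,Y_A'(\lambda)) \;-\; \frac{\mathrm{Cov}(X_A,Y_A(\lambda))\,\mathrm{Cov}(Y_A(\lambda),Y_A'(\lambda))}{V(\lambda)}\right].
\]
Since the prefactor is positive, $\dot{c_p}>0$ iff the bracket is positive. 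Dividing each of the two bracket terms by the positive quantity $\sqrt{V_X\cdot\Var(Y_A'(\lambda))}$ puts them into the normalized forms $r/\sqrt{o\cdot t_1}$ and $t_2/\sqrt{t_3\cdot t_1}$, respectively, with the natural identifications $r\leftrightarrow\mathrm{Cov}(X_A,Y_A')$, $t_1\leftrightarrow V_X$, $o,t_3\leftrightarrow\Var(Y_A')$, and $t_2\leftrightarrow\mathrm{Cov}(X_A,Y_A)\,\mathrm{Cov}(Y_A,Y_A')/V(\lambda)$ (or an equivalent regrouping consistent with the paper's conventions).

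The main obstacle will be matching the resulting algebraic quantities to the paper's precise definitions of $r_{z',\lambda},o_{z',\lambda},t_{1,z',\lambda},t_{2,z',\lambda},t_{3,z',\lambda}$, which are presumably chosen so that the downstream surrogate indicator can be evaluated without retraining: all terms involving only $Y_A$ and $Y_A'$ reduce to quadratic forms in $g_A$ weighted by $\gsl{-1}$ and $\gsl{-2}$ and are computable from the single full-data checkpoint $\ts{S}$ together with the per-sample gradients $\{\nabla_\theta L(z,\ts{S})\}_{z\in S}$, while the $\mathrm{Cov}(X_A,\cdot)$ pieces are what the surrogate must later replace. Two minor technical points need care along the way: (i) sign handling, since $\mathrm{Cov}(Y_A,Y_A')$ and $\mathrm{Cov}(X_A,Y_A)$ can each be of either sign and the inequality direction in~(\ref{eq:lhs-rhs}) must track this; and (ii) since the derivation above is in fact an equivalence, to obtain a strictly one-sided sufficient condition one natural extra step is the Cauchy--Schwarz bound $|\mathrm{Cov}(Y_A,Y_A')|\le\sqrt{V(\lambda)\,\Var(Y_A')}$, which upper-bounds the subtracted term and cleanly absorbs the $1/V(\lambda)$ factor into~$t_3$. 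Everything else is routine manipulation of covariances and variances.
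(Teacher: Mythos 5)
Your opening step---differentiating the Pearson correlation by the quotient rule to reduce $\dot{c_p}(\lambda;z')>0$ to the sign of $\mathrm{Cov}(X_A,Y_A')-\mathrm{Cov}(X_A,Y_A)\,\mathrm{Cov}(Y_A,Y_A')/\Var(Y_A)$---is correct and is essentially the paper's \Cref{prop:expand-cp}, which (after computing $\EA[(X_A-\E X_A)\nabla_\theta R_A(\ts{S})]=g_{z'}=\frac{1}{n}J^\top\alpha_{z'}$ and $\VarA[\nabla_\theta R_A(\ts{S})]\propto F_S$ over uniform size-$a$ subsets) states the equivalence $r_{z',\lambda}\,t_{3,z',\lambda}>(-\dfzt{S}^\top\gsl{-2}g_{z'})\,t_{2,z',\lambda}$. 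The gap is in the bridge from this equivalence to \Cref{eq:lhs-rhs}, and your proposed dictionary is the wrong one: up to $\lambda$-independent positive constants, $r_{z',\lambda}\propto\mathrm{Cov}(X_A,Y_A)$ (not $\mathrm{Cov}(X_A,Y_A')$), $t_{2,z',\lambda}\propto\Var(Y_A)$, $t_{3,z',\lambda}\propto-\mathrm{Cov}(Y_A,Y_A')$, $t_{1,z',\lambda}$ cancels from the two sides of \Cref{eq:lhs-rhs} and is present only so that both sides normalize into $[0,1]$ (\Cref{prop:bounded-in-0-1}), and $o_{z',\lambda}=\alpha_{z'}^\top(JJ^\top+n\lambda I_n)^{-1}\alpha_{z'}$ is a quadratic form in the retraining-dependent vector $\alpha_{z'}$, not $\Var(Y_A')$.

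The second, more consequential problem is that your Cauchy--Schwarz is applied to the wrong pair. The paper eliminates the factor $-\dfzt{S}^\top\gsl{-2}g_{z'}$ (i.e., $\mathrm{Cov}(X_A,Y_A')$ up to sign and constant): using the push-through identity (\Cref{lemma:push-through}) to rewrite everything in $\mathbb{R}^n$, the generalized Cauchy--Schwarz inequality with inner product $\frac{1}{n}(\frac{1}{n}JJ^\top+\lambda I_n)^{-1}$ applied to $-(\frac{1}{n}JJ^\top+\lambda I_n)^{-1}J\dfzt{S}$ and $\alpha_{z'}$ gives $-\dfzt{S}^\top\gsl{-2}g_{z'}\leq\sqrt{o_{z',\lambda}\cdot t_{3,z',\lambda}}$, and substituting this into the equivalence yields exactly \Cref{eq:lhs-rhs}. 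Your bound $|\mathrm{Cov}(Y_A,Y_A')|\leq\sqrt{\Var(Y_A)\Var(Y_A')}$ instead bounds the term that the theorem keeps; carried through, it produces the sufficient condition $\mathrm{Cov}(X_A,Y_A')>\mathrm{Cov}(X_A,Y_A)\sqrt{\Var(Y_A')/\Var(Y_A)}$, which involves a $t_{4}$-type quantity rather than $o_{z',\lambda}$ and, in the only nontrivial case $-\dfzt{S}^\top\gsl{-2}g_{z'}>0$ (so $\mathrm{Cov}(X_A,Y_A')<0$ while the right-hand side is nonnegative when $r_{z',\lambda}>0$), is vacuous. To complete the proof you need the $J,\alpha_{z'}$ representation of $g_{z'}$ and the push-through lemma so that the Cauchy--Schwarz can be taken against $\alpha_{z'}$ in the $n$-dimensional sample space.
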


\paragraph{Intuition behind \Cref{thm:sufficient-condition-positive-derivative}.}
Let's first focus on RHS of \Cref{eq:lhs-rhs}. By observing the definition of $t_{k, z', \lambda}$ for $k=1,2,3$, RHS can be viewed as the cosine ``angle'' between $v_{z'}$ and $(F_S + \lambda I_p)^{-1} v_{z'}$. Here, $(F_S + \lambda I_p)^{-1} v_{z'}$ is related to the $\theta$ update direction in IFFIM, as in \Cref{eq:def-iffim}. When RHS is small, this ``angle'' is overly large, suggesting a significant misalignment between the $\theta$ update direction and the gradient descent direction direction. In this case, increasing $\lambda$ helps intuitively because it conditions the inverted matrix $F_S$ better, stabilizing the update direction. When the ``angle'' is too small, $F_S$ is overshadowed by the regularization term $\lambda I_p$ (as $v_{z'}$ is roughly parallel to $(F_S + \lambda I_p)^{-1} v_{z'}$), meaning that the attribution method may become dominated by regularization and lose sensitivity to the underlying structure of data. Similarly, LHS of \Cref{eq:lhs-rhs} measures the ``angle'' between $v_{z'}$ and the ground-truth direction that involves $g_{z'}$. Hence \Cref{thm:sufficient-condition-positive-derivative} is in essence stating that when the ``angle'' between $v_{z'}$ and $(F_S + \lambda I_p)^{-1} v_{z'}$ is of comparable level as the ``angle'' between $v_{z'}$ and the ground-truth direction, Population Pearson LDS could reach its maximum.

\begin{remark}[Proof Sketch for \Cref{thm:sufficient-condition-positive-derivative}]
The detailed proof can be found in \Cref{appendix:theory:proof-thm-sufficient}. We provide a proof sketch here. We start by expanding $\dot{c_p}(\lambda; z')$ with the definition of $\tau_{\mathrm{IFFIM}, \lambda}$. Then, We draw connections between $\VarA[\nabla_\theta R_A(\ts{S})]$, a crucial term occurred in the expression of $\dot{c_p}(\lambda; z')$, and $F_S$. Further, we state a condition (\Cref{eq:equivalent-ineq}) equivalent to $\dot{c_p}(\lambda; z')>0$. Finally, we derive \Cref{eq:lhs-rhs} from \Cref{eq:equivalent-ineq} with \Cref{lemma:push-through} and the generalized Cauchy-Schwarz inequality.
\end{remark}

\begin{remark}[Gradient Projection]
    A result similar to \Cref{thm:sufficient-condition-positive-derivative} can also be derived for IFFIM attributor with gradient projection (see $\tau_{\mathrm{IFFIM}, \lambda, P}$ in \Cref{eq:def-iffim-rp}). In this case, given the projection matrix $P$, $t_{k, z', \lambda}$ should be replaced with $v_{z'}^\top P(P^\top F_S P+\lambda I_{\tilde{p}})^{-k}P^\top F_S P P^\top v_{z'}$ while $r_{z', \lambda}$ should be replaced with $-v_{z'}^\top P (P^\top F_S P + \lambda I_{\tilde{p}})^{-1} P^\top g_{z'}$. Additionally, $JJ^\top+n\lambda I_n$ in the $o_{z', \lambda}$ should be replaced with $JPP^\top J^\top + n\lambda I_n$. Please see more details in \Cref{appendix:theory:gradient-projection}.
    \label{remark:gradient-projection}
\end{remark}

\subsection{Algorithm: The Surrogate-Indicator-Based Practical Selection}\label{sec:algorithm}

\paragraph{The surrogate indicator.} The sufficient condition in \Cref{thm:sufficient-condition-positive-derivative} enables a practical selection algorithm for $\lambda$ based on the following surrogate indicator $\xi_{z', \lambda}$.
\begin{definition}[Surrogate Indicator]\label{def:surrogate-indicator}
We define the \emph{surrogate indicator} as
\begin{equation}
    \xi_{z', \lambda} := \frac{t_{2, z', \lambda}}{\sqrt{t_{3, z', \lambda} \cdot t_{1, z', \lambda}}},
    \label{eq:def-si}
\end{equation}
which is the RHS of \Cref{eq:lhs-rhs}.
\end{definition}
Note that the quantities $t_{k, z', \lambda}$ for $k=1,2,3$ only depends on model parameters $\theta_S^*$ trained on the full dataset $S$, thus $\xi_{z', \lambda}$ can be calculated without retraining. Moreover, both the LHS and the RHS ($\xi_{z', \lambda}$) of \Cref{eq:lhs-rhs} are well normalized.
\begin{proposition}
    If $r_{z', \lambda} > 0$, then LHS and RHS of \Cref{eq:lhs-rhs} both lie in the interval $[0, 1]$.
    \label{prop:bounded-in-0-1}
\end{proposition}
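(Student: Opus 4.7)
The plan is to treat the two inequalities separately, bounding the RHS via an eigendecomposition of $F_S$ and the LHS via a Cauchy–Schwarz argument in the PSD metric $(JJ^\top + n\lambda I_n)^{-1}$, after suitably rewriting $r_{z',\lambda}$ using the $J$-form of $F_S$.

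For the RHS, first let $u := \nabla_\theta f(z', \ts{S})$ and diagonalize $F_S = U\,\mathrm{diag}(\mu_i)\,U^\top$ with $\mu_i \geq 0$. Since $(F_S + \lambda I_p)^{-k} F_S = U\,\mathrm{diag}\!\bigl(\mu_i/(\mu_i+\lambda)^k\bigr)\,U^\top$, writing $v = U^\top u$ yields $t_{k,z',\lambda} = \sum_i v_i^2 \mu_i/(\mu_i+\lambda)^k$. The three quantities $t_1, t_2, t_3$ are therefore nonnegative, and the Cauchy–Schwarz inequality applied to the factorization
\[
\frac{v_i^2\mu_i}{(\mu_i+\lambda)^2} = \left[\frac{v_i\sqrt{\mu_i}}{(\mu_i+\lambda)^{1/2}}\right]\cdot\left[\frac{v_i\sqrt{\mu_i}}{(\mu_i+\lambda)^{3/2}}\right]
\]
gives $t_{2,z',\lambda}^2 \leq t_{1,z',\lambda}\cdot t_{3,z',\lambda}$. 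Hence $\xi_{z',\lambda} \in [0,1]$.

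For the LHS, the key identity is $F_S = \tfrac{1}{n}J^\top J$, which gives $(F_S+\lambda I_p)^{-1} = n(J^\top J + n\lambda I_p)^{-1}$ and, by the push-through identity, $(J^\top J + n\lambda I_p)^{-1} J^\top = J^\top(JJ^\top + n\lambda I_n)^{-1}$. Since $g_{z'} = \tfrac{1}{n}J^\top \alpha_{z'}$, these two identities collapse $r_{z',\lambda}$ to
\[
r_{z',\lambda} = -\,u^\top J^\top (JJ^\top + n\lambda I_n)^{-1}\alpha_{z'}.
\]
The same push-through identity, applied in the other direction, also gives $J^\top (JJ^\top+n\lambda I_n)^{-1} J = (F_S+\lambda I_p)^{-1} F_S$, so $u^\top J^\top (JJ^\top+n\lambda I_n)^{-1} J u = t_{1,z',\lambda}$. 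Now Cauchy–Schwarz in the PSD inner product induced by $M := (JJ^\top + n\lambda I_n)^{-1}$ yields
\[
r_{z',\lambda}^2 = \bigl(u^\top J^\top M\,\alpha_{z'}\bigr)^2 \leq (u^\top J^\top M J u)\,(\alpha_{z'}^\top M \alpha_{z'}) = t_{1,z',\lambda}\cdot o_{z',\lambda},
\]
which, combined with the hypothesis $r_{z',\lambda}>0$, yields the desired LHS bound in $[0,1]$.

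The main conceptual hurdle is realizing that $t_{1,z',\lambda}$ and $o_{z',\lambda}$ are precisely the two diagonal quadratic forms that pair with $r_{z',\lambda}$ in a single Cauchy–Schwarz instance once $r_{z',\lambda}$ is rewritten through the $J$-form; the rest is mechanical. A minor caveat is ensuring $t_{1,z',\lambda} > 0$ and $o_{z',\lambda} > 0$ so that the ratios are well-defined, but the hypothesis $r_{z',\lambda} \neq 0$ already forces both via the Cauchy–Schwarz inequality above.
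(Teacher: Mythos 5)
Your proof is correct and follows essentially the same route as the paper: rewrite $r_{z',\lambda}$, $t_{1,z',\lambda}$, and $o_{z',\lambda}$ via the push-through identity and apply Cauchy--Schwarz in the inner product induced by $(JJ^\top + n\lambda I_n)^{-1}$ for the LHS, and apply Cauchy--Schwarz again for $t_{2,z',\lambda}^2 \leq t_{1,z',\lambda}\,t_{3,z',\lambda}$ for the RHS. The only cosmetic difference is that you establish the RHS bound in the eigenbasis of $F_S$ rather than through the $J$-form, and you additionally note that $r_{z',\lambda}\neq 0$ forces the denominators to be positive, which the paper leaves implicit.
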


\paragraph{The practical selection algorithm.} In principle, we need both the LHS and RHS of \Cref{eq:lhs-rhs} to evaluate the sufficient condition in \Cref{thm:sufficient-condition-positive-derivative}. In practice, however, we find that using $\bar{\xi}_{T, \lambda} := \frac{1}{|T|}\sum_{z' \in T}\xi_{z', \lambda} = 0.5$, where $T$ is a validation dataset, as the criterion\footnote{In fact, setting the threshold as 0.4 or 0.6 is similarly effective. The $\xi_{z', \lambda}$ as a function of $\log\lambda$ looks like a logistic function and it varies sharply when its value is around 0.5. See \Cref{appendix:si:plot-average-si} for visualizations.} to select the regularization term $\lambda$ is uniformly effective across all the experimental settings. This yields a practical algorithm (\Cref{algo:si}) for selecting the hyperparameter $\lambda$ without requiring model retraining.

\begin{algorithm}
\caption{Selecting $\lambda$ with the surrogate indicator.}
\label{algo:si}
\begin{algorithmic}[1]
    \Statex \textbf{Input:} A candidate set $\mathcal{C}$ of $\lambda$, a subset $T \subset \mathcal{Z}$ of test examples. 
    \Statex \textbf{Output:} A selected $\hat{\lambda}$. 
    \For{$\lambda \in \mathcal{C}$}
        \State Compute $\xi_{z', \lambda}$ for all $z' \in T$;
        \State $\bar{\xi}_{T, \lambda} \leftarrow \frac{1}{|T|}\sum_{z' \in T}\xi_{z', \lambda}$;
    \EndFor
    \State $\hat{\lambda} \leftarrow \argmin_{\lambda \in \mathcal{C}} |\bar{\xi}_{T, \lambda} - 0.5|$;
\end{algorithmic}
\end{algorithm}

\subsection{Experiments: Validating the Proposed Surrogate-Indicator-Based Selection Algorithm}
\label{sec:experiment-si} 

\begin{figure}[t]
    \centering
    \includegraphics[width=0.8\linewidth]{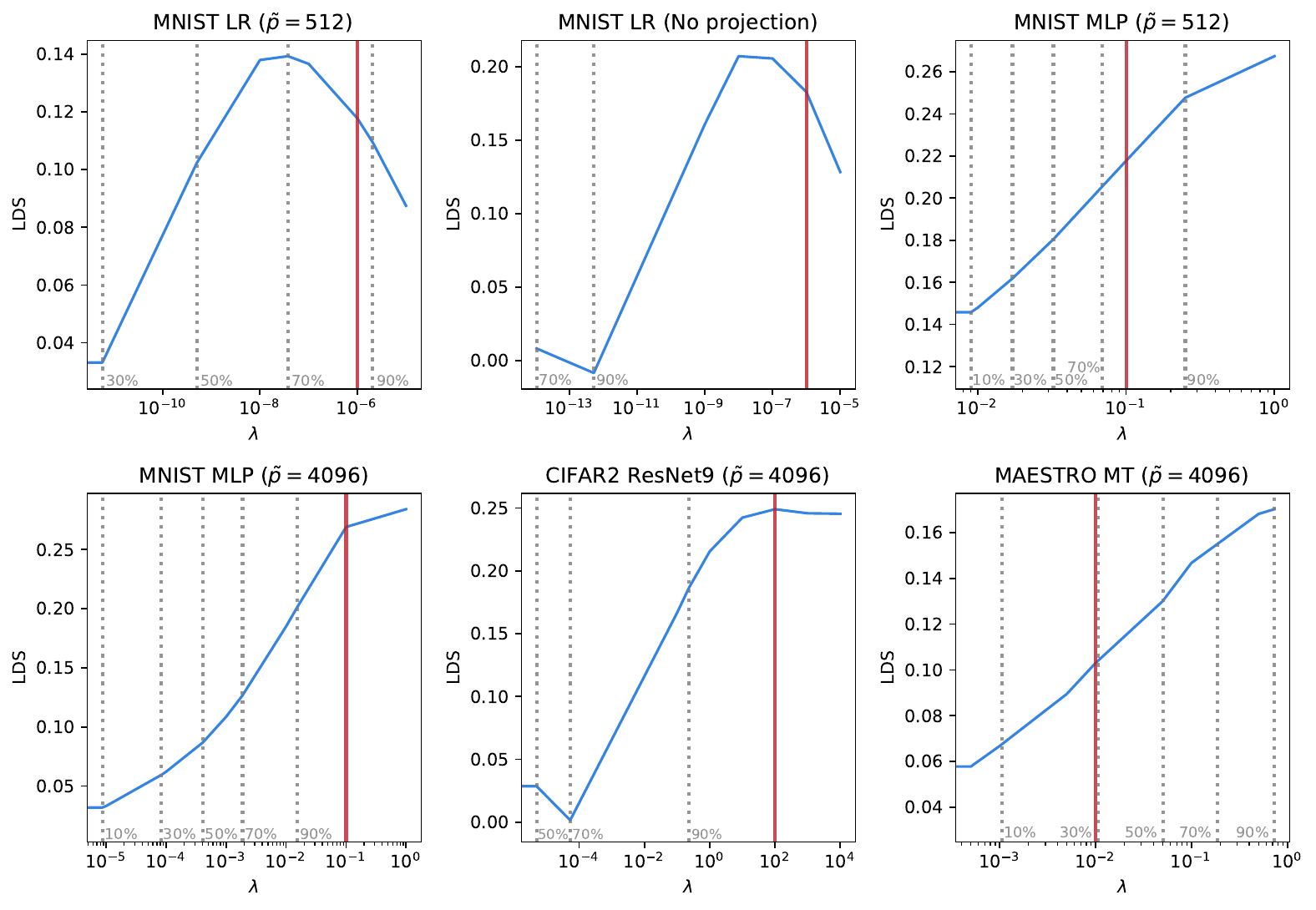}
    \caption{The plot of LDS versus $\lambda$ on the IFFIM attributor. Each subplot corresponds to an experiment setting. The red solid vertical line indicates the $\lambda$ selected by our method. The gray dotted lines indicate the eigenvalues of $F_S$ corresponding to the 10\%, 30\%, 50\%, 70\%, and 90\% quantiles. The quantiles are annotated in gray color near the x-axis.} 
    \label{fig:heuristic}
\end{figure}

We present experiments validating the proposed algorithm for selecting the regularization term $\lambda$. 

\paragraph{Experiment settings.} \underline{Dataset, model, and projection dimension:} We employ 6 experiment settings, covering different datasets, model scales, and projection dimensions. First two settings apply Logistic Regression models (LR) on MNIST dataset \citep{lecun1998gradient}, where one utilizes random projection with dimension $512$ and the other uses no projection. They are followed by two settings that apply MLP, with random projection $512$ and $4096$ respectively. Finally, we have two relatively large models, including ResNet-9~\citep{he2016deep} on CIFAR-2 dataset \citep{krizhevsky2009learning}, and MusicTransformer (MT)~\citep{anna2018music} on MAESTRO dataset~\citep{hawthorne2018enabling}. We fix the random projection dimension to be $4096$ in the last two settings. We also conduct experiments on WikiText2~\citep{merity2016pointer} with GPT2~\citep{radford2019language}, the results of which can be found in \Cref{appendix:si:wikitext2-gpt2}.
\underline{Attributor:} We experiment with the IFFIM attributor defined in \Cref{eq:def-iffim} (or \Cref{eq:def-iffim-rp} with projection), and the TRAK attributor \citep{park2023trak}. \underline{Baseline selection method:} We also compare the proposed algorithm against a baseline method that selects $\lambda$ based on the spectrum of $F_S$ since the main motivation of $\lambda$ is to deal with the singularity of $F_S$. Specifically, the baseline method set $\lambda$ to be the eigenvalue of $F_S$ at a fixed quantile among $\{10\%, 30\%, 50\%, 70\%, 90\%\}$.

\begin{figure}[t]
    \centering
    \includegraphics[width=0.8\linewidth]{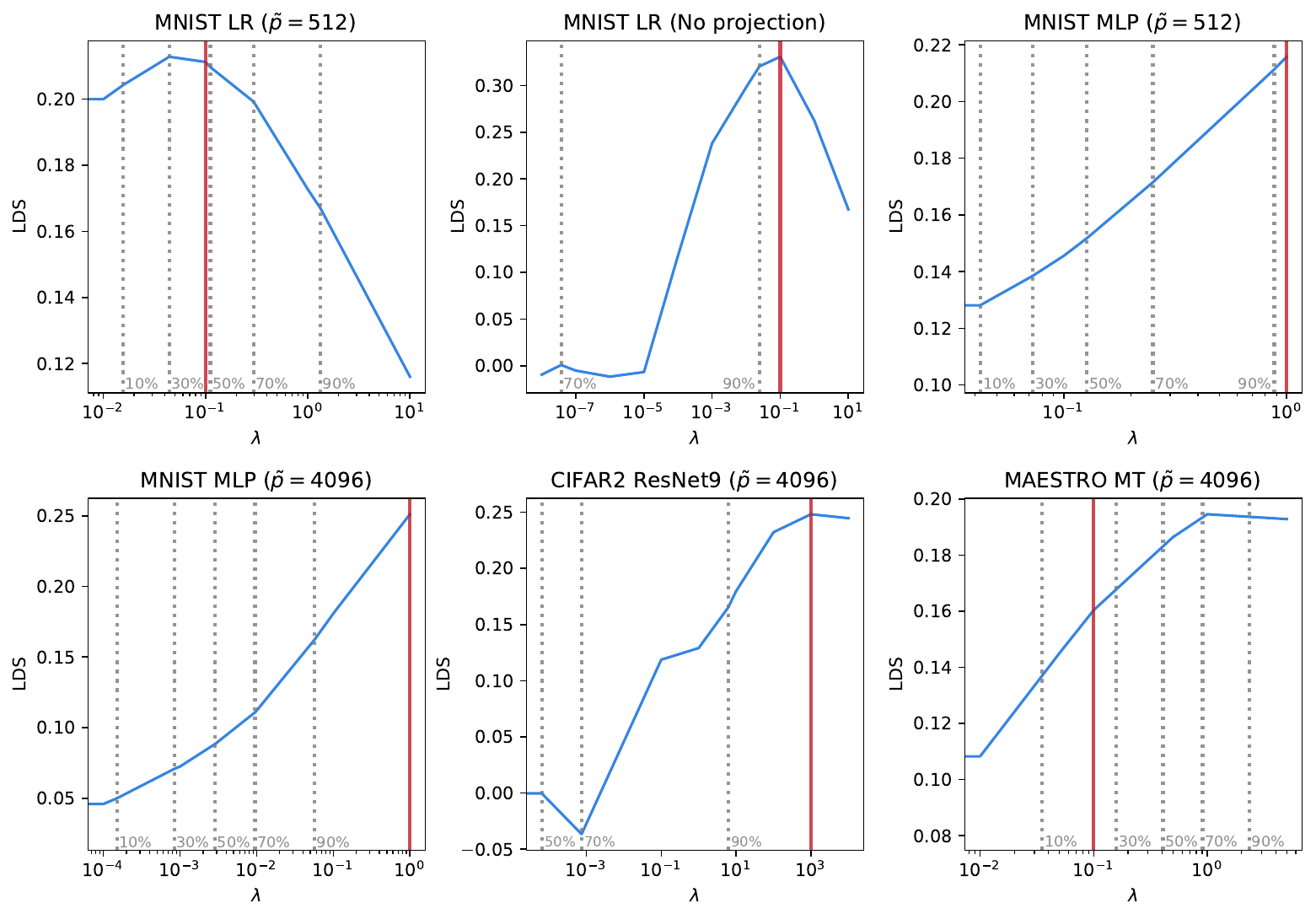}
    \caption{The plot of LDS versus $\lambda$ on the TRAK attributor. Please see \Cref{fig:heuristic} for the plotting setup.}
    \label{fig:heuristic-trak}
\end{figure}

\paragraph{Results.} We demonstrate the experimental results on the IFFIM and TRAK respectively in \Cref{fig:heuristic} and \Cref{fig:heuristic-trak}. In each subplot, the blue curve represents the change in LDS with respect to $\lambda$. We mark the $\lambda$ chosen by the proposed algorithm by the red solid vertical line, and the baselines in the gray dotted vertical lines. Overall, we observe that the $\lambda$ chosen by the proposed algorithm generally leads to good LDS value that is significantly better than $\lambda=0$, and in many case it is close to the optimal value. Moreover, while the proposed algorithm is based on an analysis of the IFFIM attributor, it generalizes well to the TRAK attributor under various experiment settings with different projection dimensions. In contrast, there is no single fixed quantile for spectrum-based baseline method that consistently performs well, indicating that this naive heuristic is insufficient for selecting $\lambda$. 

\paragraph{Downstream tasks performance.} We also include an evaluation of our proposed algorithm on downstream settings, and the detailed results are presented in~\Cref{appendix:si:downstream}. We focus on the task of data selection commonly used in data attribution literature~\citep{ghorbani2019data}, where we measure the performance drop of models when the most positively influential training data are removed. We note here that a larger drop signifies stronger attribution quality. In general, we observe that our proposed algorithm achieves visible model performance drops, demonstrating its effectiveness in downstream tasks.

\section{Discussion and Conclusion}
\label{sec:discussion-conclusion}

This work brings attention to a fundamental yet overlooked challenge in data attribution: the uniquely high computational cost associated with hyperparameter tuning. Unlike typical machine learning models---where validation metrics can be cheaply computed---evaluating data attribution quality often requires repeated retraining on data subsets, making standard hyperparameter tuning procedures impractical. By systematically characterizing this issue, we contribute the first large-scale empirical study on hyperparameter sensitivity in data attribution methods, and establish this hyperparameter tuning bottleneck as a critical concern for their practical deployment.

Through extensive experiments across diverse methods, datasets, and models, we demonstrate that many data attribution methods are indeed highly sensitive to hyperparameter configurations, with optimal choices varying significantly across settings. For example, we show that some hyperparameters---such as regularization and projection dimension---interact in nontrivial ways, amplifying sensitivity and necessitating more careful tuning strategies. Moreover, implicit factors (such as training epoch) matter; comparing attributions across checkpoints can be informative.

To mitigate the high computational cost of standard evaluation metrics, we focus on the regularization parameter in influence function-based methods and propose a practical selection procedure based on a theoretically motivated surrogate indicator. Our method avoids model retraining and shows robust empirical performance across multiple benchmarks and attributors, making it a practically useful tool for tuning one of the most critical hyperparameters in popular data attribution methods.

\paragraph{Limitations.} Our theoretical analysis and surrogate-based tuning procedure focus specifically on the regularization parameter in influence function methods. Extending similar ideas to other hyperparameters or to alternative data attribution methods remains an important direction for future research. Moreover, our empirical evaluation primarily relies on the LDS metric. While we believe LDS provides a strong starting point for highlighting the practical challenges of hyperparameter tuning, it would be valuable to investigate whether similar sensitivities arise under alternative evaluation metrics.

\section*{Acknowledgment}

The authors would like to thank Juhan Bae for helpful discussions that partially motivated this project. 

This project was in part supported by NCSA Delta GPU at NCSA through allocation CIS240402 from the Advanced Cyberinfrastructure Coordination Ecosystem: Services \& Support (ACCESS) program~\citep{boerner2023access}, which is supported by National Science Foundation grants \#2138259, \#2138286, \#2138307, \#2137603, and \#2138296. HZ is partially supported by an NSF IIS grant No.\ 2416897 and a Google Research Scholar Award. The views and conclusions expressed in this paper are solely those of the authors and do not necessarily reflect the official policies or positions of the supporting companies and government agencies.

\bibliography{reference}
\bibliographystyle{abbrvnat}

\newpage
\appendix

\section{Details of the Hyperparameter Study}
\label{appendix:hp-study}

\subsection{Hyperparameters Definitions}\label{sappendix:hp-study-hp-definitions}
Apart from major hyperparameters described in Section~\ref{sec:related:hp}, some additional hyperparameters are considered in our hyperparameter sensitivity study. We provide their definitions as following:

\begin{itemize}
    \item \hp{normalization} (TracIn)~\citep{pruthi2020estimating}: We may normalize the per-example gradients in TracIn by dividing their L2-norm. In other words, \hp{normalization}=True means that $\tau_{\mathrm{TracIn}}(z', z_i) := \sum_{t} \eta_t\frac{\nabla_\theta f(z', \theta_t)^\top}{\|\nabla_\theta f(z', \theta_t)^\top\|}\frac{\nabla_\theta L(z_i, \theta_t)}{\|\nabla_\theta L(z_i, \theta_t)\|}$.
    \item \hp{checkpoint-selection} (TracIn)~\citep{pruthi2020estimating}: The checkpoints used to calculate $\tau_{\mathrm{TracIn}}$ can be designed in various way. In this paper, we select the last 10 epochs for different maximum training epoch numbers.
    \item \hp{max-iteration} (IF (CG))~\citep{koh2017understanding}: This hyperparameter specifies the maximum number of steps the conjugate gradient algorithm will take to attempt convergence. In the experiment, we do not set stop criteria so that the algorithm will stop at the maximum number of iteration.
    \item \hp{scaling} \& \hp{recursion-depth}\ \& \hp{batch-size} (IF (LiSSA))~\citep{koh2017understanding}: The LiSSA algorithm for inverse Hessian vector product (IHVP) is an iterative algorithm where each iteration $t$ applies the formula $v^t = g+(I-\frac{1}{\eta}(H^t+\lambda I))v^{t-1}$, and $g$ is the target vector. The hyperparameters \hp{scaling} refers to $\eta$; \hp{recursion-depth} indicates the maximum $t$ where the iteration stops; \hp{batch-size} indicates how many data points are used to calculate the batch-wise hessian matrix $H^t$.
\end{itemize}

\paragraph{Default values.} In the following table, we list the default value of each hyperparameter. The default value is used when other hyperparameters are searched. They are selected according to the default value in original paper.

\begin{table}[h]
\caption{Default values of hyperparameters.}
\begin{center}
\begin{tabular}{@{}l|l|l@{}}
\toprule
TDA method                     & Hyperparameter Name         & Default values                \\ \midrule
\multirow{3}{*}{TRAK-10}       & $\hp{regularization}$       & 0                             \\
                               & $\hp{pojection-dimension}$  & 512 (2048 for WikiText2+GPT2) \\
                               & $\hp{training-epoch}$       & 50 (3 for WikiText2+GPT2)     \\ \midrule
\multirow{3}{*}{LoGra}         & $\hp{regularization}$       & 1e-3                          \\
                               & $\hp{pojection-dimension}$  & $64^2$         \\
                               & $\hp{training-epoch}$       & 3                             \\ \midrule
\multirow{2}{*}{IF (explicit)} & $\hp{regularization}$       & 1e-5                          \\
                               & $\hp{training-epoch}$       & 50                            \\ \midrule
\multirow{3}{*}{IF (CG)}       & $\hp{regularization}$       & 1e-2                          \\
                               & $\hp{max-iteration}$        & 10                            \\
                               & $\hp{training-epoch}$       & 50                            \\ \midrule
\multirow{5}{*}{IF (LiSSA)}    & $\hp{regularization}$       & 1e-3                          \\
                               & $\hp{scaling}$              & 5                             \\
                               & $\hp{recursion-depth}$      & 1000                          \\
                               & $\hp{batch-size}$           & 50                            \\
                               & $\hp{training-epoch}$       & 50                            \\ \midrule
\multirow{3}{*}{TracIn}        & $\hp{normalization}$        & False                         \\
                               & $\hp{pojection-dimension}$  & None (i.e., no projection)                \\
                               & $\hp{checkpoint-selection}$ & last 10 checkpoints     \\ \bottomrule
\end{tabular}
\end{center}
\end{table}

\subsection{Additional Results}\label{sappendix:hp-study-additional-results}
In Figure~\ref{fig:hp-additional-results}, we present additional results not included in Figure~\ref{fig:comprehensive-study-sensitivity} due to space limit in the main text. These additional results confirm that most TDA methods are sensitive to certain hyperparameters. Intriguingly, \hp{training-epoch} is one of the most sensitive hyperparameter for most TDA methods.

\begin{figure}[h]
  \centering
  \begin{subfigure}[t]{0.32\textwidth}
    \includegraphics[width=\linewidth]{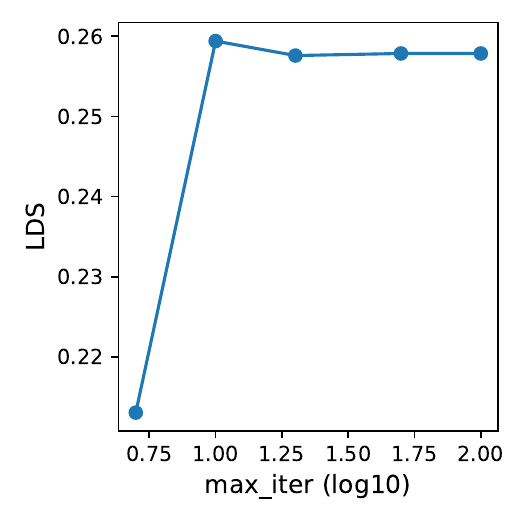}
    \caption{MNIST+MLP. Attributor: IF(CG). HP: \hp{max-iteration}.}
  \end{subfigure}
  \hfill
  \begin{subfigure}[t]{0.32\textwidth}
  \includegraphics[width=\linewidth]{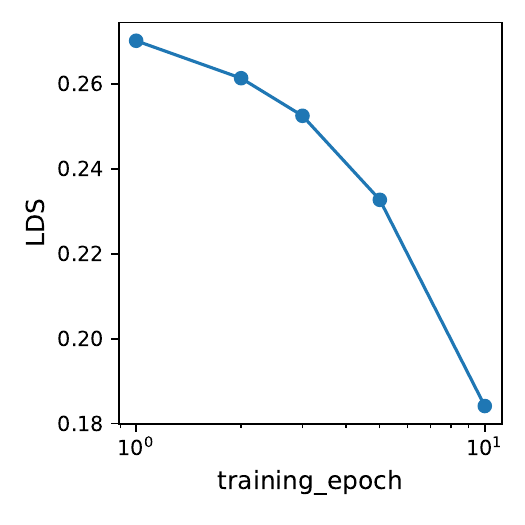}
    \caption{WikiText2+GPT2. Attributor: LoGra. HP: \hp{training-epoch}.}
  \end{subfigure}
  \hfill
  \begin{subfigure}[t]
  {0.32\textwidth}
    \includegraphics[width=\linewidth]{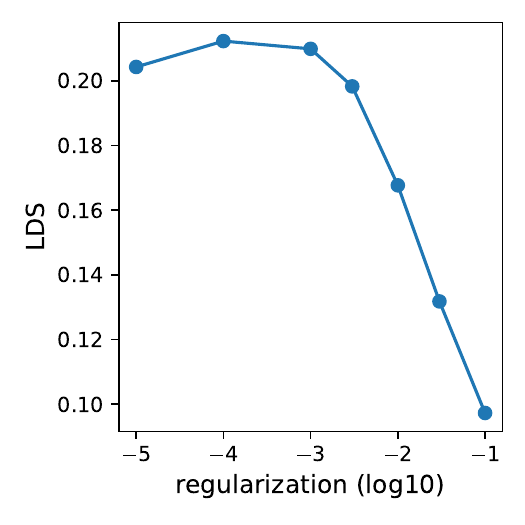}
    \caption{WikiText2+GPT2. Attributor: TRAK; HP: \hp{regularization}.}
  \end{subfigure}\\
  \begin{subfigure}[t]{0.32\textwidth}
    \includegraphics[width=\linewidth]{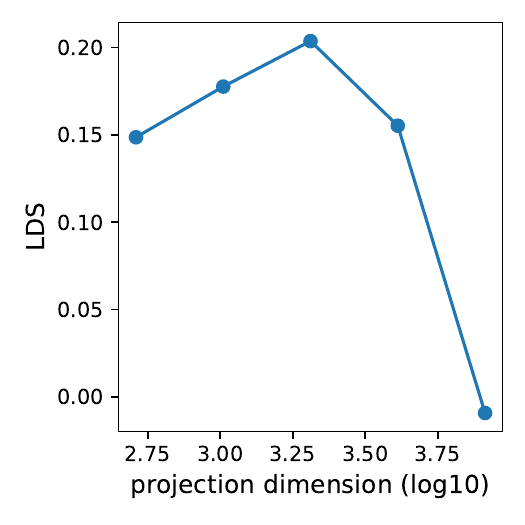}
    \caption{WikiText2+GPT2. Attributor: TRAK; HP: \hp{projection-dimension}.}
  \end{subfigure}
  \hfill
  \begin{subfigure}[t]{0.32\textwidth}
    \includegraphics[width=\linewidth]{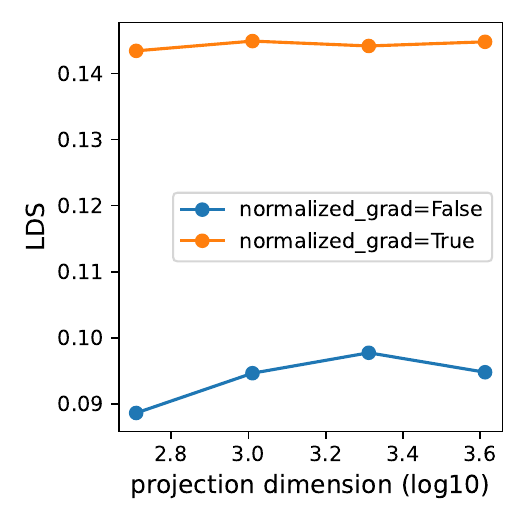}
    \caption{MNIST+MLP. Attributor: TracIn; HP: \hp{normalization}, \hp{projection-dimension}.}
  \end{subfigure}
  \hfill
  \begin{subfigure}[t]
  {0.32\textwidth}
    \includegraphics[width=\linewidth]{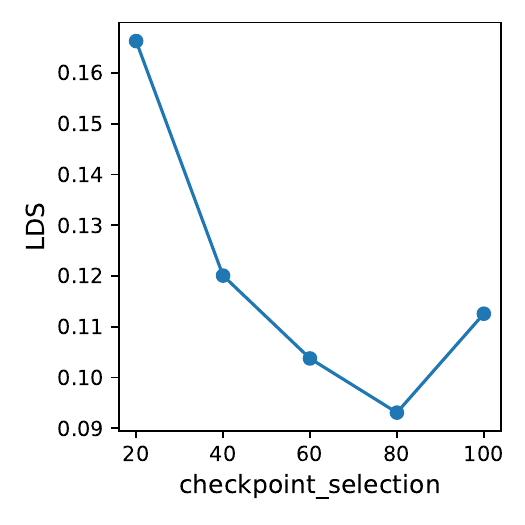}
    \caption{MNIST+MLP. Attributor: TracIn; HP:\hp{checkpoint-selection}.}
  \end{subfigure}\\
   \begin{subfigure}[t]{0.32\textwidth}
    \includegraphics[width=\linewidth]{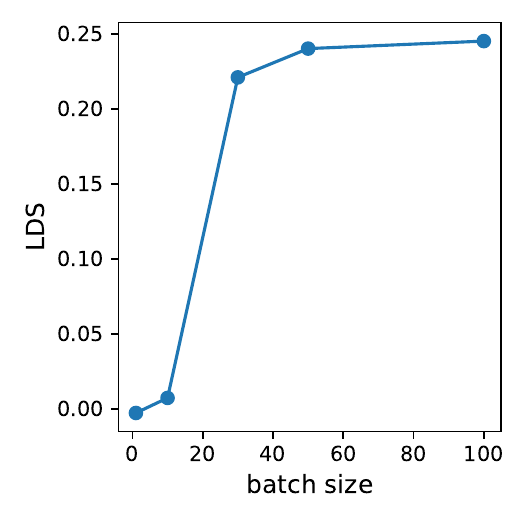}
    \caption{MNIST+MLP. Attributor: IF(LiSSA). HP: \hp{batch-size}.}
  \end{subfigure}
  \begin{subfigure}[t]{0.32\textwidth}
    \includegraphics[width=\linewidth]{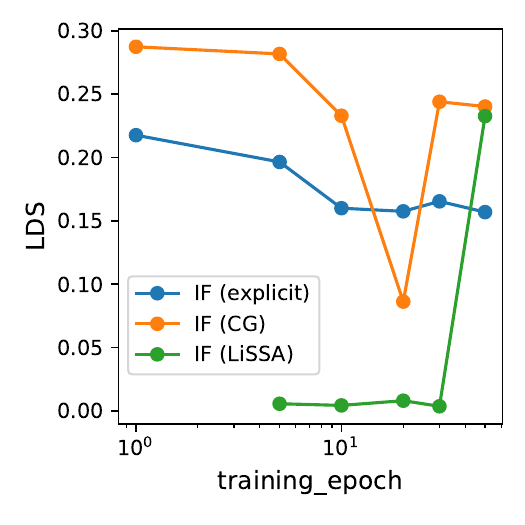}
    \caption{MNIST+MLP. Attributor: IF. HP: \hp{training-epoch}.}
  \end{subfigure}
  \caption{Study of hyperparameter sensitivity additional results.}~\label{fig:hp-additional-results}
\end{figure}

\subsection{Computational Resources and Dataset Licenses}\label{sappendix:hp-study-additional-settings}
The experiments for the hyperparameter sensitivity analysis are done on 4 A100 GPUs in around 100 hours, excluding model retraining (we reused some model checkpoints provided by the dattri library to avoid extensive model retraining). For the dataset we use: MNIST-10 dataset holds CC BY-SA 3.0 license; CIFAR-10 dataset holds CC-BY 4.0 license; WikiText2 dataset holds CC BY-SA 3.0 license. 

\section{Omitted Details of the Theoretical Analysis in \Cref{sec:reg}}
\label{appendix:theory}

\subsection{Relationship between TRAK and IFFIM}
\label{appendix:theory:relationship-trak-iffim}

As noted in Section 2.1, TRAK can be viewed as a variant of IFFIM with additional computational tricks. This section demonstrates their similarity and distinctions in more details.

We first formally introduce the TRAK attributor without gradient projection as follows\footnote{In practical implementation, \citet{park2023trak} dropped the diagonal matrix \(R\) due to slightly improved empirical performance, and projected \(\Phi\) and the gradients to lower dimension for computational efficiency.}.
\[\tau_{\mathrm{TRAK}}(z', z_i) := \nabla_\theta f(z', \ts{S})^\top (\Phi^\top R \Phi)^{-1}\nabla_\theta f(z_i, \ts{S}) \cdot (1-p_i), \]
where \(\Phi \in \mathbb{R}^{n \times p}\) has its \(i\)th row being \(\nabla_\theta f(z_i, \ts{S})^\top\), \(R := \diag\{p_i(1-p_i)\}_{i=1}^n\), and $p_i := p(z_i, \ts{S})$.

Compared to the IFFIM attributor in \Cref{eq:def-iffim}, there are two main differences. First, the right most gradient term in TRAK is \(\nabla_\theta f(z_i, \ts{S})(1-p_i)\) while its counterpart in IFFIM is \(-\nabla_\theta L(z_i, \ts{S})\). Second, IFFIM has the empiricla FIM \(F_S\) in the middle while TRAK has \(\Phi^\top R \Phi\).

\paragraph{The right gradient term difference.} We first show that \(\nabla_\theta f(z_i, \ts{S})(1-p_i)\) is equivalent to \(-\nabla_\theta L(z_i, \ts{S})\) as
\[ -\nabla_\theta L(z_i, \ts{S}) = -\at{\frac{\partial \ln(1+e^{-f})}{\partial f}}{f=f(z_i,\ts{S})}{}\nabla_\theta f(z_i, \ts{S}) = (1-p(z_i, \ts{S}))\nabla_\theta f(z_i, \ts{S}). \]

\paragraph{The middle term difference.} The middle term in TRAK can be written as
\[\frac{1}{n}\Phi^\top R \Phi = \frac{1}{n}\sum_{i=1}^n\nabla_\theta f(z_i, \ts{S})(p_i(1-p_i))\nabla_\theta f(z_i, \ts{S})^\top. \]
Noting that $p_i(1-p_i)$ is the Hessian of cross-entropy loss with respect to model output, and the whole \(\frac{1}{n}\Phi^\top R\Phi\) is known as the Gerneralized Gauss Newton (GGN) matrix~\citep{martens2020new}.

Furthermore, \citet{martens2020new} has shown that \(\frac{1}{n}J^\top J\) (the empirical FIM) and \(\frac{1}{n}\Phi^\top R \Phi\) (the GGN) both reduce to the true FIM when $S$ converges to the true underlying distribution of $\mathcal{Z}$. 

This comparison indicates that there is fundamental similarity between TRAK and IFFIM despite superficial algorithmic differences.

\subsection{LOO Weighted Loss Gradients}
\label{appendix:theory:loo-weighted-loss-gradients}

As suggested by \citet{park2023trak}, we assume decreasing $a$ by $1$ does not shift the distribution of $f(z', \theta_A^*)$, i.e.,
$$\EA[f(z', \ts{A})|z_i \notin A] = \E_{A \sim D_{a-1}}[f(z', \ts{A}) | z_i \notin A].$$
Then, by rewriting the RHS, we have
$$ \EA[f(z', \ts{A})|z_i \notin A] = \EA[f(z', \ts{A \backslash \{z_i\}}) | z_i \in A], $$ and hence
\begin{align*}
     \alpha_{z', i} &= \EA[f(z', \ts{A})|z_i\in A] - \EA[f(z', \ts{A})] \vphantom{\frac{A}{A}} \\
              &= \EA[f(z', \ts{A})|z_i\in A] \\
              &\quad\quad\quad - \Pr_{A \sim D_a}[z_i \in A] \EA[f(z', \ts{A})|z_i \in A] - \Pr_{A \sim D_a}[z_i \notin A]\EA[f(z', \ts{A})|z_i \notin A] \\
              &=\Pr_{A \sim D_a}[z_i \notin A](\EA[f(z', \ts{A})|z_i\in A] - \EA[f(z', \ts{A})|z_i \notin A]) \\
              &=(1-\frac{a}{n})\EA[f(z', \ts{A}) - f(z', \ts{A \backslash \{z_i\}}) | z_i \in A],
\end{align*}
which means that the weights $\alpha_{z', i}$ in $g_{z'}$ can be interpreted as the LOO influence of $z_i$ on $z'$.

\subsection{Proof of Theorem \ref{thm:sufficient-condition-positive-derivative}}
\label{appendix:theory:proof-thm-sufficient}

We prove \Cref{thm:sufficient-condition-positive-derivative} before \Cref{lemma:lds-monotonicity} for better readability. We first establish several intermediate results for \Cref{eq:lhs-rhs}.

\begin{lemma}
    We have
    \begin{equation*}
        \VarA[\nabla_\theta R_A(\ts{S})] = \frac{n-a}{a(n-1)}F_S,
    \end{equation*}
    where $D_a$ is defined in \Cref{para:g-alpha}.
    \label{lemma:var-dl}
\end{lemma}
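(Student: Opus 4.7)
The plan is to treat this as a textbook computation of the variance of a sample mean under simple random sampling without replacement (SRSWOR), applied to the finite population of per-example gradients $g_i := \nabla_\theta L(z_i, \ts{S})$, with one crucial twist coming from first-order optimality.

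First I would rewrite $\nabla_\theta R_A(\ts{S}) = \frac{1}{a}\sum_{z_i \in A} g_i = \frac{1}{a}\sum_{i=1}^n X_i g_i$, where $X_i := \mathbbm{1}[z_i \in A]$ are indicators of a uniformly random size-$a$ subset. Since $\ts{S}$ minimizes $R_S$, the first-order optimality condition gives $\sum_{i=1}^n g_i = 0$, so $\EA[\nabla_\theta R_A(\ts{S})] = \frac{a}{n}\cdot\frac{1}{a}\sum_i g_i = 0$. Hence the covariance matrix we want equals the raw second moment $\EA[(\nabla_\theta R_A(\ts{S}))(\nabla_\theta R_A(\ts{S}))^\top]$, which expands to $\frac{1}{a^2}\sum_{i,j}\EA[X_i X_j]\, g_i g_j^\top$.

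Next I would plug in the standard SRSWOR moments: $\EA[X_i^2] = \EA[X_i] = a/n$ and $\EA[X_iX_j] = a(a-1)/(n(n-1))$ for $i\neq j$. Splitting the double sum into diagonal and off-diagonal pieces and then using $\sum_{i\neq j}g_ig_j^\top = \bigl(\sum_i g_i\bigr)\bigl(\sum_j g_j\bigr)^\top - \sum_i g_i g_i^\top = -\sum_i g_ig_i^\top$ (again by $\sum_i g_i = 0$) collapses the expression to a single scalar multiple of $\sum_i g_ig_i^\top = n F_S$. A short algebraic simplification of the coefficients then yields $\frac{n-a}{a(n-1)}F_S$.

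There is no real obstacle here, just bookkeeping: the only conceptually substantive point is that the finite-population correction $(n-a)/(a(n-1))$ arises from the negative covariance of SRSWOR indicators, and that the statement uses the uncentered $F_S$ rather than a centered covariance precisely because the optimality of $\ts{S}$ forces $\bar{g}=0$. The most error-prone step is combining the diagonal $a(n-a)/n^2$ contribution with the off-diagonal $-a(n-a)/(n^2(n-1))$ contribution after substituting $\sum_{i\neq j} g_i g_j^\top = -\sum_i g_i g_i^\top$; I would do that carefully, factor out $(n-a)/(an^2)\cdot n/(n-1)$, and read off the claimed identity.
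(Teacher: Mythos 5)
Your proposal is correct and matches the paper's proof in all essentials: both reduce the variance to the uncentered second moment via the optimality condition $\sum_i \nabla_\theta L(z_i,\ts{S})=0$, split the double sum into diagonal and off-diagonal parts using the inclusion probabilities $a/n$ and $a(a-1)/(n(n-1))$ (which the paper derives by counting $\binom{n-2}{a-2}/\binom{n}{a}$ rather than quoting SRSWOR moments), and collapse the off-diagonal sum with $\sum_{i\neq j}g_ig_j^\top=-\sum_i g_ig_i^\top$. The only difference is notational (indicator variables versus direct summation over subsets), and your final coefficient algebra checks out.
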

\begin{proof}

By the optimality of \(\ts{S}\), we know\(\sum_{z \in S}[\nabla_\theta L(z, \ts{S})] = n\cdot\nabla_\theta R_S(\ts{S}) = 0\). We also have \(\EA[\sum_{z \in A}\nabla_\theta L(z, \ts{S})] = 0\).

Now, because $A$ is uniformly sampled size $a$ subsets of $S$,
\begin{align*}
    &\;\VarA[\nabla_\theta R_A(\ts{S})] = \VarA[\frac{1}{a}\sum_{z \in A}\nabla_\theta L(z, \ts{S})] \\
    =&\;\frac{1}{a^2}\EA[(\sum_{z \in A}\nabla_\theta L(z, \ts{S}))(\sum_{z \in A}\nabla_\theta L(z, \ts{S}))^\top] - 0 \\
    =&\;\frac{1}{a^2}(a\E_{z \sim S}[\nabla_\theta L(z, \ts{S})\nabla_\theta L(z, \ts{S})^\top] + \EA[\sum_{\substack{z_1 \in A \\ z_2 \in A \\ z_1 \neq z_2}}\nabla_\theta L(z_1, \ts{S})\nabla_\theta L(z_2, \ts{S})^\top]).
\end{align*}

The second summand can be reduced to
\begin{align*}
    &\;\EA[\sum_{\substack{z_1 \in A \\ z_2 \in A \\ z_1 \neq z_2}}\nabla_\theta L(z_1, \ts{S})\nabla_\theta L(z_2, \ts{S})^\top]) \\
    =&\;\frac{1}{\binom{n}{a}}\sum_{A \in D_a}\sum_{\substack{z_1 \in A \\ z_2 \in A \\ z_1 \neq z_2}}\nabla_\theta L(z_1, \ts{S})\nabla_\theta L(z_2, \ts{S})^\top \\
    =&\;\frac{1}{\binom{n}{a}}\sum_{\substack{z_1 \in S \\ z_2 \in S \\ z_1 \neq z_2}}\binom{n-2}{a-2}\nabla_\theta L(z_1, \ts{S})\nabla_\theta L(z_2, \ts{S})^\top \\
    =&\;\frac{a(a-1)}{n(n-1)}\sum_{\substack{z_1 \in S \\ z_2 \in S \\ z_1 \neq z_2}}\nabla_\theta L(z_1, \ts{S})\nabla_\theta L(z_2, \ts{S})^\top \\
    =&\;\frac{a(a-1)}{n(n-1)}((\sum_{z\in S}\nabla_\theta L(z, \ts{S}))(\sum_{z\in S}\nabla_\theta L(z, \ts{S}))^\top - \sum_{z \in S}\nabla_\theta L(z, \ts{S})\nabla_\theta L(z, \ts{S})^\top) \\
    =&\;-\frac{a(a-1)}{n-1}\E_{z \sim S}[\nabla_\theta L(z, \ts{S})\nabla_\theta L(z, \ts{S})^\top],
\end{align*}
where $A \in D_a$ means $A$ is a size $a$ subset of $S$.

Finally, by the definition of $F_S$,
\begin{align*}
    \VarA[\nabla_\theta R_A(\ts{S})] = \frac{1}{a^2}(a \cdot F_S - \frac{a(a-1)}{n-1}\cdot F_S) = \frac{n-a}{a(n-1)} F_S.
\end{align*}

\end{proof}

\Cref{lemma:var-dl} facilitates analyzing the variance of $\sum_{z\in A}\tau_{\mathrm{IFFIM}, \lambda}(z', z)$ in \Cref{eq:def-cp}, thereby enabling verification of the following equivalent condition of $\dot{c_p}(\lambda; z') > 0$.

\begin{proposition}
    For a test example $z'$, $\dot{c_p}(\lambda; z') > 0$ is equivalent to
    \begin{align}
    \begin{split}
        r_{z', \lambda}\cdot t_{3, z', \lambda } > (-\dfzt{S}^\top\gsl{-2}g_{z'})\cdot t_{2, z', \lambda}.
    \end{split}
    \label{eq:equivalent-ineq}
    \end{align}
    \label{prop:expand-cp}
\end{proposition}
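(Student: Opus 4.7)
The plan is to rewrite $c_p(\tau_{\mathrm{IFFIM}, \lambda}, z')$ as a Pearson ratio under $A \sim D_a$, isolate the pieces that depend on $\lambda$, and differentiate directly. Let $X_A := f(z', \theta^*_A)$ and $Y_A := \sum_{z \in A}\tau_{\mathrm{IFFIM}, \lambda}(z', z) = -a \cdot \dfzt{S}^\top\gsl{-1}\nabla_\theta R_A(\ts{S})$. Then $c_p = \mathrm{Cov}(X_A, Y_A)/\sqrt{\VarA[X_A]\,\VarA[Y_A]}$, and since $\VarA[X_A]$ does not involve $\lambda$, the sign of $\dot{c_p}$ is determined purely by the covariance-to-$\sqrt{\VarA[Y_A]}$ ratio.

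First, I would apply \Cref{lemma:var-dl} to the linear form $Y_A$ to obtain $\VarA[Y_A] = \frac{a(n-a)}{n-1}\cdot t_{2, z', \lambda}$. Next, I would compute $\mathrm{Cov}(X_A, Y_A)$. The first-order optimality of $\ts{S}$ gives $\sum_i \nabla_\theta L(z_i, \ts{S}) = 0$, so $\EA[\nabla_\theta R_A(\ts{S})] = 0$ and only $\EA[f(z', \ts{A})\,\nabla_\theta R_A(\ts{S})]$ remains. Expanding $\nabla_\theta R_A(\ts{S}) = \frac{1}{a}\sum_i \mathbb{1}\{z_i \in A\}\nabla_\theta L(z_i, \ts{S})$, noting $\Pr(z_i \in A) = a/n$, and recognizing the conditional expectations that define $\alpha_{z', i}$ (re-centering by $\EA[f(z', \ts{A})]$ is free, again by $\sum_i \nabla_\theta L(z_i, \ts{S}) = 0$), this expectation collapses to $g_{z'}$. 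Therefore $\mathrm{Cov}(X_A, Y_A) = a \cdot r_{z', \lambda}$.

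Substituting, the sign of $\dot{c_p}(\lambda; z')$ equals the sign of $\frac{d}{d\lambda}\bigl[r_{z', \lambda}/\sqrt{t_{2, z', \lambda}}\bigr]$. Using $\frac{d}{d\lambda}\gsl{-k} = -k\gsl{-(k+1)}$ gives $\frac{d r_{z', \lambda}}{d\lambda} = \dfzt{S}^\top\gsl{-2}g_{z'}$ and $\frac{d t_{2, z', \lambda}}{d\lambda} = -2 t_{3, z', \lambda}$. A final application of the quotient rule, combined with $t_{2, z', \lambda} > 0$, rearranges to precisely \Cref{eq:equivalent-ineq}.

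The hard part will be the covariance computation: it requires careful bookkeeping of the inclusion indicator $\mathbb{1}\{z_i \in A\}$, correctly reading off the conditional expectation that defines $\alpha_{z', i}$, and using the first-order condition twice (once to drop $\EA[\nabla_\theta R_A(\ts{S})]$, once to justify the re-centering that turns raw conditional expectations into $\alpha_{z', i}$). Everything else is routine matrix-calculus manipulation of $\gsl{-k}$.
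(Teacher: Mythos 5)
Your proposal is correct and follows essentially the same route as the paper's proof: both reduce the attribution sum to $-a\cdot\dfzt{S}^\top\gsl{-1}\nabla_\theta R_A(\ts{S})$, invoke \Cref{lemma:var-dl} for the variance, collapse the covariance to $g_{z'}$ via the inclusion-indicator bookkeeping and the first-order condition, discard the $\lambda$-independent factors, and differentiate $r_{z',\lambda}/\sqrt{t_{2,z',\lambda}}$ to reach \Cref{eq:equivalent-ineq}. The only cosmetic difference is that you phrase the derivative via $\frac{d}{d\lambda}\gsl{-k}=-k\gsl{-(k+1)}$ and the quotient rule, where the paper reports the same result as a direct calculation.
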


\begin{proof}

We first simplify $\sum_{z\in A}\tau_{\mathrm{IFFIM}, \lambda}(z', z)$ in \Cref{eq:def-cp}:
\begin{align*}
    \sum_{z \in A}\tau_{\mathrm{IFFIM}, \lambda}(z', z) &= -\sum_{z \in A}\dfzt{S}^\top(F_S + \lambda I_p)^{-1}\nabla_\theta L(z, \ts{S}) \\
    &= -a\cdot \dfzt{S}^\top(F_S + \lambda I_p)^{-1} \nabla_\theta R_A(\ts{S}).
\end{align*}

Three terms are involved when we expand the definition of $c_p(\tau_{\mathrm{IFFIM}, \lambda})$. The first is the numerator:
\begin{align*}
     &\;\EA[\sum_{z \in A}\tau_{\mathrm{IFFIM}, \lambda}(z', z) \cdot (f(z', \ts{A}) - \E_{A' \sim D_a}[f(z', \ts{A'})])] \\
    =&\;\EA[-a\dfzt{S}^\top\gsl{-1}\nabla_\theta R_A(\ts{S}) \cdot (f(z', \ts{A}) - \E_{A' \sim D_a}[f(z', \ts{A'})])],
\end{align*} 
where the component that depends on $A$ is
\begin{align*}
     &\;\EA[\nabla_\theta R_A(\ts{S}) \cdot (f(z', \ts{A}) - \E_{A' \sim D_a}[f(z', \ts{A'})]] \\
    =&\; \frac{1}{\binom{n}{a}}\sum_{A \in D_a}(f(z', \ts{A})-\E_{A'\sim D_a}[f(z', \ts{A'})]) \frac{1}{a} \sum_{z \in A} \nabla_\theta L(z, \ts{S}) \\
    =&\; \frac{1}{a\binom{n}{a}}\sum_{i=1}^n (\sum_{A \in D_a} \llbracket z_i \in A\rrbracket(f(z', \ts{A})-\E_{A'\sim D_a}[f(z', \ts{A'})])) \nabla_\theta L(z_i, \ts{S}) \\
    =&\; \frac{1}{a\binom{n}{a}}\sum_{i=1}^n \binom{n-1}{a-1}\EA[f(z', \ts{A}) - \E_{A'\sim D_a}[f(z', \ts{A'})] | z_i \in A]\nabla_\theta L(z_i, \ts{S}) \\
    =&\; \frac{1}{n}\sum_{i=1}^n \EA[f(z', \ts{A}) - \E_{A'\sim D_a}[f(z', \ts{A'})] | z_i \in A]\nabla_\theta L(z_i, \ts{S}) \\
    =&\; \frac{1}{n}\sum_{i=1}^n \alpha_{z', i}\nabla_\theta L(z_i, \ts{S}) = g_{z'}.
\end{align*}
Note that we apply $a\binom{n}{a} = n\binom{n-1}{a-1}$.

The second term is the variance of $\sum_{z\in A}\tau_{\mathrm{IFFIM}, \lambda}(z', z)$:
\begin{align*}
     &\;\VarA[\sum_{z\in A}\tau_{\mathrm{IFFIM}, \lambda}(z', z)] \\
    =&\;\VarA[-a \cdot \dfzt{S}^\top\gsl{-1}\nabla_\theta R_A(\ts{S})] \\
    =&\;a^2 \cdot\dfzt{S}^\top\gsl{-1}\VarA[\nabla_\theta R_A(\ts{S})]\gsl{-1}\dfzt{S} \\
    =&\;\frac{a(n-a)}{n-1}\cdot\dfzt{S}^\top\gsl{-1}F_S\gsl{-1}\dfzt{S},
\end{align*}
where we apply the identity \Cref{lemma:var-dl}.

The third term is the variance of $f(z', \ts{A})$, which, together with $a$ in the numerator and \(\frac{a(n-a)}{n-1}\) in the variance of $\sum_{z\in A}\tau_{\mathrm{IFFIM}, \lambda}(z', z)$, are omitted because they do not depend on $\lambda$. To summarize, so far we have shown that $\dot{c_p}(\lambda; z') > 0$ is equivalent to
$$ \at{\frac{\partial}{\partial \lambda}\frac{-\dfzt{S}^\top\gsl{-1}g_{z'}}{\sqrt{\dfzt{S}^\top\gsl{-1}F_S\gsl{-1}\dfzt{S}}}}{\lambda}{} > 0. $$

By direct calculation, the left hand side is
$$ \frac{\splitfrac{(-\dfzt{S}^\top\gsl{-1}g_{z'})(\dfzt{S}^\top\gsl{-3}F_S\dfzt{S})}{-(-\dfzt{S}^\top\gsl{-2}g_{z'})(\dfzt{S}^\top\gsl{-2}F_S\dfzt{S})}}{(\dfzt{S}^\top\gsl{-1}F_S\gsl{-1}\dfzt{S})^{3/2}}. $$

When the positive denominator is dropped, we obtain the desired formula.
    
\end{proof}

Before we move to the proof of next result, we first introduce a useful matrix transformation lemma.

\begin{lemma}
    For matrix $X \in \mathbb{R}^{n \times p}$, non-negative integer $k$, and $\lambda > 0$, we have
    \begin{equation*}
        (X^\top X + \lambda I_p)^{-k} X^\top = X^\top (X X^\top + \lambda I_n)^{-k}.
    \end{equation*}
    \label{lemma:push-through}
\end{lemma}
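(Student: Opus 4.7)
The plan is to prove the identity by induction on $k$, reducing to the standard push-through identity as the base case. The main work lies in verifying the $k=1$ case, after which the inductive step is a clean algebraic manipulation.

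First, I would dispose of the trivial case $k=0$: both sides reduce to $X^\top$ (with the convention that $M^0 = I$ for any square matrix $M$, with the matrix sizes matching correctly on each side). Next, for the base case $k=1$, I would start from the elementary identity
\begin{equation*}
    X^\top(X X^\top + \lambda I_n) = X^\top X X^\top + \lambda X^\top = (X^\top X + \lambda I_p) X^\top,
\end{equation*}
which holds simply by distributivity. Since $\lambda > 0$, both $X^\top X + \lambda I_p$ and $X X^\top + \lambda I_n$ are positive definite, hence invertible. Multiplying the displayed equation on the left by $(X^\top X + \lambda I_p)^{-1}$ and on the right by $(X X^\top + \lambda I_n)^{-1}$ then yields the $k=1$ identity $(X^\top X + \lambda I_p)^{-1} X^\top = X^\top (X X^\top + \lambda I_n)^{-1}$.

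For the inductive step, assuming the identity holds at $k$, I would write
\begin{equation*}
    (X^\top X + \lambda I_p)^{-(k+1)} X^\top = (X^\top X + \lambda I_p)^{-1}\bigl[(X^\top X + \lambda I_p)^{-k} X^\top\bigr],
\end{equation*}
apply the induction hypothesis inside the brackets to replace it with $X^\top (X X^\top + \lambda I_n)^{-k}$, and then apply the base case once more to push the remaining $(X^\top X + \lambda I_p)^{-1}$ past $X^\top$. This produces $X^\top (X X^\top + \lambda I_n)^{-1} (X X^\top + \lambda I_n)^{-k} = X^\top (X X^\top + \lambda I_n)^{-(k+1)}$, closing the induction.

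There is no real obstacle here: the only subtlety is keeping the identity matrices of the correct dimensions ($I_p$ versus $I_n$) straight in the manipulations, and ensuring invertibility, which is guaranteed by $\lambda > 0$. The whole argument is two short displays plus an induction wrapper.
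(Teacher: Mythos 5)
Your proposal is correct and follows essentially the same route as the paper: establish the $k=1$ push-through identity from $X^\top(XX^\top+\lambda I_n)=(X^\top X+\lambda I_p)X^\top$ and then iterate it $k$ times, which you formalize as an explicit induction (plus the trivial $k=0$ case). The paper's proof is identical in substance, merely stating the iteration informally rather than as an induction.
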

\begin{proof}
Note that
$$ X^\top (X X^\top + \lambda I_n) = X^\top X X^\top + \lambda X^\top = (X^\top X + \lambda I_p)X^\top. $$
If we left multiply $(X^\top X+\lambda I_p)^{-1}$ and right multiply $(XX^\top + \lambda I_n)^{-1}$ on both sides, we derive
$$ (X^\top X + \lambda I_p)^{-1} X^\top  = X^\top (X X^\top + \lambda I_n)^{-1}. $$

By applying the equality $k$ times on $(X^\top X+\lambda I_p)^{-k} X^\top$ to ``push $X^\top$ through'' the inverted matrix, we complete the proof.
\end{proof}

\begin{proof}[Proof of \Cref{thm:sufficient-condition-positive-derivative}]    
Because $F_S$ is a positive semi-definite matrix, so is $(F_S+\lambda I_p)^{-k} F_S$ for any positive integer $k$. As a result,
$$ t_{k, z', \lambda} = \nabla_\theta f(z', \ts{S})^\top (F_S + \lambda I_p)^{-k} F_S \nabla_\theta f(z', \ts{S}) \geq 0. $$

In fact, $t_{k, z', \lambda} = 0 \Leftrightarrow F_S \nabla_\theta f(z', \ts{S}) = 0$ implies $t_{2, z', \lambda} = 0$ which results in an undefined $c_p$. Therefore, $t_{k, z', \lambda} > 0$. Further, by the premise, we have 
\[r_{z', \lambda} > \frac{t_{2, z', \lambda}}{\sqrt{t_{3, z', \lambda}\cdot t_{1, z', \lambda}}}\sqrt{o_{z', \lambda}\cdot t_{1, z', \lambda}} > 0.\] 
Without loss of generality, we only consider the case where \(-\nabla_\theta f(z', \ts{S})^\top (F_S + \lambda I_p)^{-2} g_{z'} > 0\), because otherwise \Cref{eq:equivalent-ineq} holds automatically as its left hand side would be positive while right hand side would be non-positive, which guarantees \(\dot{c_p}(\lambda;z') > 0\) by \Cref{prop:expand-cp}.

We proceed by rewriting \Cref{eq:equivalent-ineq} with \Cref{lemma:push-through}. By setting $X = J/\sqrt{n}$, we have
$$ t_{3, z', \lambda} = \frac{1}{n}\dfzt{S}^\top J^\top(\frac{1}{n}JJ^\top+\lambda I_n)^{-3}J\dfzt{S}, $$
$$ -\dfzt{S}^\top\gsl{-2}g_{z'} = -\frac{1}{n}\dfzt{S}^\top J^\top(\frac{1}{n}JJ^\top+\lambda I_n)^{-2}\alpha_{z'}, $$
where we apply the fact that \(F_S = \frac{1}{n}J^\top J = X^\top X\) and \(g_{z'} = \frac{1}{n}J^\top \alpha_{z'}\) by their definitions. Then, note that
\begin{align}
\begin{split}
    &\;\sqrt{t_{3, z', \lambda} \cdot o_{z', \lambda}} \\
    = &\;\sqrt{(\frac{1}{n}\dfzt{S}^\top J^\top(\frac{1}{n}JJ^\top+\lambda I_n)^{-3}J\dfzt{S})(\frac{1}{n}\alpha_{z'}^\top(\frac{1}{n}JJ^\top + \lambda I_n)^{-1}\alpha_{z'})} \\
    \geq&\;-\frac{1}{n}\dfzt{S}^\top J^\top(\frac{1}{n}JJ^\top+\lambda I_n)^{-2}\alpha_{z'} = -\dfzt{S}^\top\gsl{-2}g_{z'} > 0,
    \label{eq:appendix:sufficient:lhs-rhs:cauchy}
\end{split}
\end{align}
by applying the generalized Cauchy-Schwarz inequality on $-(\frac{1}{n}JJ^\top+\lambda I_n)^{-1}J\dfzt{S}$ and $\alpha_{z'}$ with inner product defined by positive definite matrix $\frac{1}{n}(\frac{1}{n}JJ^\top+\lambda I_n)^{-1}$. Now, we finish the proof by observing that \Cref{eq:equivalent-ineq} is derived directly by multiplying \Cref{eq:lhs-rhs} with \Cref{eq:appendix:sufficient:lhs-rhs:cauchy} and rearranging terms.
\end{proof}

\subsection{Assumptions and Proof of Lemma \ref{lemma:lds-monotonicity}}
\label{appendix:theory:proof-lemma-lds}

We base the discussion of \Cref{lemma:lds-monotonicity} on \Cref{eq:lhs-rhs}. For simplicity, we first eliminate $\sqrt{t_{1, z', \lambda}}$ on both sides to state the following sufficient condition for $\dot{c_p}(\lambda; z') > 0$:
\begin{equation}
    \frac{r_{z', \lambda}}{\sqrt{o_{z', \lambda}}} > \frac{t_{2, z', \lambda}}{\sqrt{t_{3, z', \lambda}}}.
    \label{eq:lhs-rhs-simp}
\end{equation}
Now, for \(i=1, 2, \dots, \max\{n, p\}\), we let \(\mu_i\) denote the $i$th largest eigenvalue of $F_S$ if \(i \leq p\), and \(0\) otherwise. We have the following result for \(\mu_i\):
\begin{lemma}
    For \(i=1,2,\dots,n\), \(\mu_i\) is the \(i\)th largest eigenvalue of \(\frac{1}{n}JJ^\top\).
    \label{lemma:eig}
\end{lemma}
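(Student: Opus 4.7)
The plan is to reduce the statement to the standard fact that, for any real matrix $X$, the matrices $X^\top X$ and $X X^\top$ share the same nonzero eigenvalues with the same algebraic multiplicities. I would apply this to $X = J/\sqrt{n} \in \mathbb{R}^{n \times p}$, so that $X^\top X = F_S$ (whose ordered eigenvalues are $\mu_1 \geq \mu_2 \geq \cdots \geq \mu_p$) and $X X^\top = \tfrac{1}{n} J J^\top$, and then argue that, after ordering in decreasing order and padding with zeros, the top $n$ eigenvalues of $\tfrac{1}{n} J J^\top$ agree with $\mu_1, \ldots, \mu_n$ under the convention $\mu_i = 0$ for $i > p$.

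First I would recall the shared-spectrum fact in the form most convenient here: if $v \in \mathbb{R}^p$ satisfies $X^\top X v = \mu v$ with $\mu \neq 0$, then $X v \neq 0$ and $X X^\top (X v) = \mu (X v)$, so $\mu$ is an eigenvalue of $X X^\top$ with eigenvector $X v$; the symmetric argument goes the other way via $X^\top$. A standard bookkeeping with linearly independent eigenvectors in each eigenspace shows the algebraic multiplicities match. Equivalently, one can simply invoke the singular value decomposition $J = U \Sigma V^\top$, so that $F_S = \tfrac{1}{n} V \Sigma^\top \Sigma V^\top$ and $\tfrac{1}{n} J J^\top = U (\tfrac{1}{n} \Sigma \Sigma^\top) U^\top$, both having the squared singular values $\sigma_i^2/n$ as their nonzero eigenvalues.

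Second I would handle the zero eigenvalues. Let $r = \mathrm{rank}(J) \leq \min(n, p)$. Then $F_S$ has exactly $r$ nonzero eigenvalues $\mu_1, \ldots, \mu_r$ and $p - r$ zero eigenvalues, so $\mu_{r+1} = \cdots = \mu_p = 0$, and by the convention $\mu_i = 0$ for all $i > p$ as well. Similarly, $\tfrac{1}{n} J J^\top$ has the same $r$ nonzero eigenvalues and $n - r$ additional zero eigenvalues. Sorting in decreasing order and comparing entrywise, the $i$th largest eigenvalue of $\tfrac{1}{n} J J^\top$ equals $\mu_i$ for every $i \in \{1, \ldots, n\}$, which is exactly the claim.

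I do not expect a real obstacle here: this is a textbook identity and the only care needed is the zero-eigenvalue bookkeeping, which is precisely what the convention $\mu_i = 0$ for $i > p$ is designed to absorb. The one thing to flag explicitly is that the claim is stated only for $i \leq n$, which is necessary because $F_S$ can have additional zero eigenvalues when $p > n$ but those indices $i > n$ are not referenced by $\tfrac{1}{n} J J^\top$ at all.
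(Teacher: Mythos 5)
Your proposal is correct and matches the paper's own argument: the paper also proves this via the singular value decomposition $J = U\Sigma V^\top$, identifying the nonzero eigenvalues of $F_S = \tfrac{1}{n}V\Sigma^\top\Sigma V^\top$ and $\tfrac{1}{n}JJ^\top = \tfrac{1}{n}U\Sigma\Sigma^\top U^\top$ as the same squared singular values and using the convention $\mu_i = 0$ for $i > p$ to absorb the zero-eigenvalue bookkeeping. No gaps.
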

\begin{proof}
    Write the singular value decomposition $J = U\Sigma V^\top$ with \(\Sigma_{ii} = \sigma_i\) for \(i\in[\min\{n, p\}]\). As \(F_S = \frac{1}{n}J^\top J = \frac{1}{n}V\Sigma^\top \Sigma V^\top\), we know \(\sigma_i = \sqrt{n\cdot\mu_i}\) for \(i\in[\min\{n, p\}]\). Further, \(\frac{1}{n}\Sigma\Sigma^\top = \diag\{\llbracket i\leq \min\{n, p\} \rrbracket\frac{1}{n}\sigma_i^2\}_{i=1}^n = \diag\{\mu_i\}_{i=1}^n\) because \(\mu_i = 0\) for \(i > p\). We finish the proof by noting that \(\frac{1}{n}JJ^\top=\frac{1}{n}U\Sigma\Sigma^\top U^\top\).
\end{proof}

To better establish a relationship between the spectrum of $F_S$ and $c_p(\tau_{\mathrm{IFFIM}, \lambda})$, we first make the following definition for convenience.

\begin{definition}
    For a vector $v \in \mathbb{R}^p$ and $i \in [\max\{n, p\}]$, let $\tilde{v}_i$ denote the $i$th component of $v$ under a chosen eigenbasis of $F_S$, where eigenvalues are sorted in descending order if $i \leq p$; $\tilde{v}_i = 0$ otheriwise. Further, let $\tilde{v}_\mathrm{min}$ denote the component of $v$ under the eigenbasis corresponding to the minimal non-zero eigenvalue $\mu_\mathrm{min}$ of $F_S$.
\end{definition}

We then provide a result on the expectation of $\widetilde{\nabla_\theta L}_i(z, \ts{S})^2$, the square of the $i$th eigen-component of $\nabla_\theta L(z, \ts{S})$, when $z$ ranges over $S$.

\begin{lemma} For $i = 1, 2, \dots, p$, we have
    \begin{equation*}
        \E_{z \sim S}[\widetilde{\nabla_\theta L}_i(z, \ts{S})^2] = \mu_i.
    \end{equation*}
    \label{lemma:eig-df}
\end{lemma}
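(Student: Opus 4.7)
The plan is to unpack the definition of the $i$th eigen-component and exploit the fact that $F_S$ is itself an average of outer products of the very gradients whose components we are squaring. Concretely, if $u_i$ denotes a unit eigenvector of $F_S$ associated with $\mu_i$ (chosen per the convention in the preceding definition), then by definition $\widetilde{\nabla_\theta L}_i(z, \ts{S}) = u_i^\top \nabla_\theta L(z, \ts{S})$, so
\[
\E_{z \sim S}[\widetilde{\nabla_\theta L}_i(z, \ts{S})^2] = u_i^\top \E_{z \sim S}[\nabla_\theta L(z, \ts{S})\nabla_\theta L(z, \ts{S})^\top] u_i.
\]

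Next, I would recall from \Cref{eq:Gs} that the inner expectation is exactly $F_S$. Substituting this and using $F_S u_i = \mu_i u_i$ together with $u_i^\top u_i = 1$ immediately gives $u_i^\top F_S u_i = \mu_i$, which is the claimed identity. The argument is valid for every $i \in [p]$ regardless of whether the eigenvalue is repeated, since one only needs \emph{some} orthonormal eigenbasis associated with descending eigenvalues, and the quadratic form depends only on the chosen $u_i$.

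There is essentially no obstacle here: the lemma is a one-line consequence of the spectral theorem applied to the gradient covariance and the fact that $F_S$ coincides with that covariance. The only thing worth being slightly careful about is the indexing convention from the definition of $\widetilde{v}_i$ so that the ``$i$th'' component really corresponds to the eigenvector for $\mu_i$; I would state this explicitly at the start of the proof to avoid any ambiguity when preparing the subsequent argument that handles indices $i > p$ separately (where both sides vanish by convention).
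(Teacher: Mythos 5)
Your proof is correct and is essentially the same as the paper's: the paper conjugates $F_S$ by the full eigenbasis matrix $V$ and compares diagonal entries of $\Lambda = V^\top \E_{z\sim S}[\nabla_\theta L(z,\ts{S})\nabla_\theta L(z,\ts{S})^\top]V$, which is exactly your quadratic-form computation $u_i^\top F_S u_i = \mu_i$ carried out for all $i$ at once.
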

\begin{proof}
We adapt the proof from \cite{choe2024your}. Let's write the eigen-decomposition of $F_S$:
$$ F_S = V\Lambda V^\top $$
where $\Lambda = \diag\{\mu_i\}_{i=1}^p$. Then
\begin{align*}
    \Lambda &= V^\top F_S V \\
    &=V^\top \E_{z\sim S}[\nabla_\theta L(z, \ts{S})\nabla_\theta L(z, \ts{S})^\top]V \\
    &= \E_{z\sim S}[V^\top\nabla_\theta L(z, \ts{S})\nabla_\theta L(z, \ts{S})^\top V] \\
    &= \E_{z \sim S}\left[\left[\widetilde{\nabla_\theta L}_i(z, \ts{S})\widetilde{\nabla_\theta L}_j(z, \ts{S})\right]_{i, j = 1}^p\right].
\end{align*}
We obtain the desired equality by comparing diagonal terms.
\end{proof}

Next, we introduce an assumption regarding the concentration of distribution of $\widetilde{\nabla_\theta L}_i(z', \ts{S})^2$ when the test example $z'$ is sampled.

\begin{assumption}
    Assume there exist constants $0 < C_1 < C_2$ such that for $i = 1, 2, \dots, n$,
    \begin{equation}
        C_1 \E_{z \sim S}[\widetilde{\nabla_\theta L}_i(z, \ts{S})^2] \leq \widetilde{\nabla_\theta L}_i(z', \ts{S})^2 \leq C_2 \E_{z \sim S}[\widetilde{\nabla_\theta L}_i(z, \ts{S})^2],
    \label{eq:assum-c1-c2}
    \end{equation}
    with high probability over the sample of $z'$ from the test distribution.
    \label{assumption:concentration}
\end{assumption}

\Cref{assumption:concentration} basically assumes that the distribution given by $S$ represents the test distribution well regarding the relative size of eigen-components. Then, we are able to derive an upper bound for the RHS (right hand side) of \Cref{eq:lhs-rhs-simp}.

\begin{proposition}
    Under \Cref{assumption:concentration}, for all $\lambda > 0$, we have
    \begin{equation}
        \mathrm{RHS} < \frac{nC_2}{(1-p(z', \ts{S}))\sqrt{C_1}}\frac{(\mu_{\mathrm{min}}+\lambda)^{3/2}}{\mu_{\mathrm{min}}},
        \label{eq:rhs}
    \end{equation}
    with high probability over the sample of $z'$ from the test distribution.
    \label{prop:rhs}
\end{proposition}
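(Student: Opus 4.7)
The plan is to bound the numerator $t_{2, z', \lambda}$ from above and the denominator $\sqrt{t_{3, z', \lambda}}$ from below, then divide. First I would diagonalize $F_S$ in its eigenbasis so that
\[t_{k, z', \lambda} = \sum_{i=1}^{p} \frac{\mu_i \, \tilde{v}_i^2}{(\mu_i + \lambda)^k},\]
where $\tilde{v}_i$ denotes the $i$th eigen-component of $\nabla_\theta f(z', \ts{S})$. Using the identity $\nabla_\theta f(z',\theta) = -(1-p(z',\theta))^{-1}\nabla_\theta L(z',\theta)$ derived in \Cref{appendix:theory:relationship-trak-iffim}, this rewrites as $\tilde{v}_i^2 = (1-p(z',\ts{S}))^{-2}\,\widetilde{\nabla_\theta L}_i(z',\ts{S})^2$. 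Combining \Cref{assumption:concentration} with \Cref{lemma:eig-df} yields the two-sided pointwise bound
\[\frac{C_1\,\mu_i}{(1-p(z',\ts{S}))^2} \;\leq\; \tilde{v}_i^2 \;\leq\; \frac{C_2\,\mu_i}{(1-p(z',\ts{S}))^2}, \qquad i = 1,\dots, n,\]
with the $i > n$ terms vanishing from $t_{k,z',\lambda}$ because $\mu_i = 0$ there by \Cref{lemma:eig}.

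For the lower bound on $t_{3, z', \lambda}$, I would simply retain the single term corresponding to the smallest nonzero eigenvalue $\mu_{\min}$; the lower half of the pointwise bound above then gives
\[t_{3, z', \lambda} \;\geq\; \frac{\mu_{\min}\,\tilde{v}_{\min}^2}{(\mu_{\min}+\lambda)^3} \;\geq\; \frac{C_1\,\mu_{\min}^2}{(1-p(z',\ts{S}))^2\,(\mu_{\min}+\lambda)^3},\]
so that taking square roots directly isolates the factors $\mu_{\min}$ and $(\mu_{\min}+\lambda)^{-3/2}$ that appear in the target bound. For the upper bound on $t_{2, z', \lambda}$, I would plug the upper half of the pointwise bound into the eigen-expansion and use the elementary inequality $\mu_i/(\mu_i + \lambda) \leq 1$ to control each $\mu_i^2/(\mu_i+\lambda)^2$ summand by a constant, yielding an estimate of the form $t_{2,z',\lambda} \leq C_2/(1-p(z',\ts{S}))^2$ (up to the constant factor absorbed into $C_2$). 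Dividing the upper bound on $t_{2,z',\lambda}$ by the lower bound on $\sqrt{t_{3,z',\lambda}}$ and collecting the $(1-p)$, $C_1$, and $C_2$ factors then produces exactly the claimed inequality. The strict inequality is a byproduct of having discarded all but a single summand in the lower bound on $t_{3,z',\lambda}$, which is generically slack.

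The main obstacle will be dispatching the sum $\sum_{i=1}^{n} \mu_i^2/(\mu_i+\lambda)^2$ in the upper bound on $t_{2,z',\lambda}$ without incurring an additional factor of the effective rank of $F_S$: a purely term-by-term estimate would leave an extra multiplicative $n$. Closing this gap cleanly requires leveraging the concentration hypothesis of \Cref{assumption:concentration} --- specifically, the fact that under the ``test resembles training'' premise, the mass of $\tilde v_i^2$ is effectively governed by a single dominant eigen-direction, so the sum collapses to a controlled constant. This is the place where the ``non-trivial technical assumptions'' caveat flagged in \Cref{lemma:lds-monotonicity} plays its role, and where the high-probability qualifier over the draw of $z'$ enters the bound.
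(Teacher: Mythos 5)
Your plan follows the paper's proof essentially step for step: expand $t_{k,z',\lambda}$ in an eigenbasis (the paper works in the eigenbasis of $\frac{1}{n}JJ^\top$ via \Cref{lemma:push-through}, which by \Cref{lemma:eig} is equivalent to your direct $F_S$-eigenbasis expansion), convert $\nabla_\theta f$ to $\nabla_\theta L$ to extract the $(1-p(z',\ts{S}))^{-2}$ factor, combine \Cref{assumption:concentration} with \Cref{lemma:eig-df} to get the pointwise bounds $C_1\mu_i \leq (1-p(z',\ts{S}))^2\tilde v_i^2 \leq C_2\mu_i$, retain only the $\mu_{\mathrm{min}}$ summand to lower-bound $t_{3,z',\lambda}$, and bound each summand of $t_{2,z',\lambda}$ by a constant using $\mu_i^2/(\mu_i+\lambda)^2 < 1$. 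Up to the last step, your mechanics match the paper's exactly.

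The obstacle you flag at the end is real, but neither your proposed resolution nor the paper's own proof actually closes it. The term-by-term estimate yields $t_{2,z',\lambda} < C_2\,\mathrm{rank}(F_S)/(1-p(z',\ts{S}))^2$; the paper's proof instead writes $\sum_{i=1}^n\llbracket\mu_i\neq 0\rrbracket C_2 \leq C_2$, which holds only if at most one eigenvalue is nonzero, so the stated bound \Cref{eq:rhs} is missing a factor of $\mathrm{rank}(F_S)$ (at worst $n$, or more sharply the effective rank $\sum_i \mu_i^2/(\mu_i+\lambda)^2$). Your suggested fix---that the concentration hypothesis collapses the mass of $\tilde v_i^2$ onto a single dominant eigendirection---does not work either: \Cref{assumption:concentration} gives two-sided \emph{multiplicative} bounds per coordinate, and its lower half in fact forces every direction with non-negligible $\mu_i$ to carry mass proportional to $\mu_i$, so the sum cannot be made $O(1)$ from that assumption alone. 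The honest version of the proposition should carry the extra rank factor on the right-hand side; this is benign for the downstream use in \Cref{lemma:formal}, where it merely shrinks the threshold $C_\mu$ by a corresponding power, but it should be stated explicitly rather than absorbed silently.
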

\begin{proof}
It is known that the $i$th component of $J\dfzt{S}$ under the eigenbasis of $\frac{1}{n}JJ^\top$ is precisely
$$ \sqrt{n \cdot\mu_i}\widetilde{\nabla_\theta f}_i(z', \ts{S}) = \frac{\sqrt{n \cdot \mu_i}}{p(z', \ts{S})-1}\widetilde{\nabla_\theta L}_i(z', \ts{S}),  $$
for $i = 1, 2, \dots, n$. This can be shown by employing the singular value decomposition $J = U\Sigma V^\top$. Under the eigenbasis of $\frac{1}{n}JJ^\top$,
\[U^\top J\nabla_\theta f(z', \ts{S}) = \Sigma V^\top \nabla_\theta f(z', \ts{S}),\]
so the identity holds because $\Sigma_{ii} = \sqrt{n \cdot \mu_i}$ for \(i \in [\min\{n, p\}]\) (see \Cref{lemma:eig}) and \(\mu_i = 0\) for \(i > p\).

For the numerator, with high probability,
\begin{align*}
    t_{2, z', \lambda} &= \frac{1}{n} \dfzt{S}^\top J^\top(\frac{1}{n}JJ^\top + \lambda I_n)^{-2}J\dfzt{S} \\
    &= \frac{1}{n(1-p(z', \ts{S}))^2}\sum_{i=1}^n\frac{(\sqrt{n \cdot \mu_i}\widetilde{\nabla_\theta L}_i(z', \ts{S}))^2}{(\mu_i+ \lambda)^2} \\
    &\leq \frac{1}{(1-p(z', \ts{S}))^2} \sum_{i=1}^n\frac{C_2\mu_i^2}{(\mu_i+ \lambda)^2} \\
    &< \frac{1}{(1-p(z', \ts{S}))^2}\sum_{i=1}^n\llbracket\mu_i\neq 0\rrbracket C_2 \\
    &\leq \frac{n}{(1-p(z', \ts{S}))^2} C_2.
\end{align*}

For the denominator, with high probability,
\begin{align*}
    t_{3, z', \lambda} &= \frac{1}{n}\dfzt{S}^\top  J^\top(\frac{1}{n}JJ^\top + \lambda I_n)^{-3}J\dfzt{S} \\
    &= \frac{1}{n(1-p(z', \ts{S}))^2}\sum_{i=1}^n\frac{(\sqrt{n\cdot \mu_i}\widetilde{\nabla_\theta L}_i(z', \ts{S}))^2}{(\mu_i + \lambda)^3} \\
    &\geq \frac{1}{(1-p(z', \ts{S}))^2} \sum_{i=1}^n\frac{C_1\mu_i^2}{(\mu_i + \lambda)^3} \\
    &\geq \frac{1}{(1-p(z', \ts{S}))^2}\frac{C_1\mu_{\mathrm{min}}^2}{(\mu_\mathrm{min} + \lambda)^3}.
\end{align*}

We finish the proof by combining these two bounds.
\end{proof}

We move to analyze the LHS (left hand side) of \Cref{eq:lhs-rhs-simp}. Before that, we need two additional technical assumptions.
\begin{assumption}
    Assume there exist constants $C_3 > 0$ and $\varepsilon_g > 0$ such that for $i = 1, 2, \dots, n$,
    \begin{equation}
        \tilde{g}_{z', i}^2 \leq C_3 \mu_i^{1+\varepsilon_g}.
    \label{eq:assum-c3}
    \end{equation}
    \label{assumption:bound-g}
\end{assumption}

\begin{remark}
    Because $g_{z'} = \frac{1}{n}J^\top \alpha_{z'}$, for $\mu_i$ equal to $0$, $\tilde{g}_{z', i}$ must also be $0$, which can be easily shown through the singular value decomposition of $J$. Hence, \Cref{assumption:bound-g} holds when \(C_3\) is large enough. However, we note here that the sufficient condition for \(\dot{c_p}(\lambda; z') > 0\) becomes tighter when \(C_3\) is larger and \(\varepsilon_g\) is smaller (see \Cref{lemma:formal} for details).
\end{remark}

\begin{assumption}
    Assume $\alpha_{z'}$ is in the column space of $J$.
    \label{assumption:alpha-J}
\end{assumption}

\begin{remark}
    A special case for this assumption is when $J$ has rank $n-1$, where the deducted $1$ rank is because $\sum_{i=1}^n \nabla_\theta L(z_i, \ts{S}) = 0$. In this case, since $\sum_{i=1}^n \alpha_{z', i} = 0$ by simple computation, $\alpha_{z'}$ is in the column space of $J$.
\end{remark}

\begin{lemma}
    Assume \Cref{assumption:alpha-J} holds. For \(i \in [n]\), if \(\mu_i = 0\), then the \(i\)th component of \(\alpha_{z'}\) under the eigenbasis of \(\frac{1}{n}JJ^\top\) is also zero.
    \label{lemma:alpha-J}
\end{lemma}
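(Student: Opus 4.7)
The plan is to combine the elementary linear-algebra identity $\mathrm{null}(JJ^\top) = \mathrm{null}(J^\top)$ with Assumption~\ref{assumption:alpha-J}. I would first fix an orthonormal eigenbasis $u_1, \ldots, u_n$ of $\tfrac{1}{n} JJ^\top$ with eigenvalues sorted in descending order, so that by construction the $i$th component of $\alpha_{z'}$ under this basis is simply $u_i^\top \alpha_{z'}$.

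Next, suppose $\mu_i = 0$. Then $\tfrac{1}{n} J J^\top u_i = 0$, and taking the inner product with $u_i$ gives
\[
0 \;=\; u_i^\top \Bigl(\tfrac{1}{n} J J^\top\Bigr) u_i \;=\; \tfrac{1}{n} \bigl\| J^\top u_i \bigr\|^2,
\]
so $J^\top u_i = 0$. By Assumption~\ref{assumption:alpha-J}, there exists $w \in \mathbb{R}^p$ with $\alpha_{z'} = J w$, and therefore
\[
u_i^\top \alpha_{z'} \;=\; u_i^\top J w \;=\; (J^\top u_i)^\top w \;=\; 0,
\]
which is the claimed vanishing of the $i$th eigen-component.

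I do not expect any real obstacle here; the whole argument is a two-line consequence of the orthogonality between $\mathrm{range}(J) \subset \mathbb{R}^n$ and $\mathrm{null}(J^\top) \subset \mathbb{R}^n$. The only point that merits explicit care is the interpretation of ``column space of $J$'' as $\{Jw : w \in \mathbb{R}^p\} \subset \mathbb{R}^n$, the ambient space in which $\alpha_{z'}$ lives, so that the factorization $\alpha_{z'} = Jw$ is legitimately available for use in the final step. Once this is spelled out, invoking the singular value decomposition is not even necessary: the vanishing of $u_i^\top \alpha_{z'}$ follows directly from $u_i \in \mathrm{null}(J^\top) = \mathrm{range}(J)^{\perp}$ and $\alpha_{z'} \in \mathrm{range}(J)$.
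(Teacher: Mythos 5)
Your proof is correct, and it is a mildly more elementary variant of the paper's argument. The paper proves the lemma by writing the SVD $J = U\Sigma V^\top$, expressing the $i$th eigen-component as $(U^\top \alpha_{z'})_i = (\Sigma V^\top \beta)_i$, and invoking \Cref{lemma:eig} to conclude that $\Sigma_{ii} = \sqrt{n\mu_i} = 0$. You bypass the decomposition entirely: from $\mu_i = 0$ you get $u_i^\top J J^\top u_i = \|J^\top u_i\|^2 = 0$, hence $J^\top u_i = 0$, and then $u_i^\top \alpha_{z'} = (J^\top u_i)^\top w = 0$ using $\alpha_{z'} = Jw$. Both proofs rest on the same underlying fact --- the zero-eigenspace of $JJ^\top$ equals $\mathrm{null}(J^\top) = \mathrm{range}(J)^\perp$, while \Cref{assumption:alpha-J} places $\alpha_{z'}$ in $\mathrm{range}(J)$ --- but your route avoids any dependence on the SVD and on \Cref{lemma:eig}, which makes it self-contained; the paper's SVD phrasing has the minor advantage of reusing machinery ($U$, $\Sigma$, $V$) that is already set up and exploited repeatedly in the surrounding proofs (e.g.\ in \Cref{prop:rhs} and \Cref{prop:lhs}). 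One small caveat: your statement that $u_i^\top \alpha_{z'}$ is ``the $i$th component by construction'' implicitly assumes the eigenbasis is orthonormal, which is the same convention the paper adopts via the orthogonal matrix $U$, so there is no gap.
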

\begin{proof}
    Write the singular value decomposition \(J=U\Sigma V^\top\). Let \(\alpha_{z'} = J\beta\) for some \(\beta \in \mathbb{R}^p\). Since \(\frac{1}{n}JJ^\top = \frac{1}{n}U\Sigma\Sigma^\top U^\top\), the \(i\)th component of \(\alpha_{z'}\) under the eigenbasis of \(\frac{1}{n}JJ^\top\) is
    \[(U^\top\alpha_{z'})_i = (U^\top U\Sigma V^\top \beta)_i = (\Sigma V^\top \beta)_i,\]
    which is zero for \(\mu_i = 0\) by \Cref{lemma:eig}.
\end{proof}

Now we are ready to derive an upper bound for $o_{z', \lambda}$.

\begin{proposition}
    Under \Cref{assumption:concentration}, \Cref{assumption:bound-g}, and \Cref{assumption:alpha-J}, for all $\lambda > 0$,
    \begin{equation}
        o_{z', \lambda} \leq \frac{C_3n}{\mu_\mathrm{min}^{1-{\varepsilon_g}}}.
        \label{eq:lhs}
    \end{equation}
    \label{prop:lhs}
\end{proposition}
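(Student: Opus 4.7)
The plan is to diagonalize $o_{z',\lambda}$ in the eigenbasis of $\tfrac{1}{n}JJ^\top$, convert the resulting coordinates of $\alpha_{z'}$ into coordinates of $g_{z'}$ via the SVD of $J$, and then apply \Cref{assumption:bound-g} term by term.

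First, I would write the singular value decomposition $J = U\Sigma V^\top$, so that $\tfrac{1}{n}JJ^\top = U\bigl(\tfrac{1}{n}\Sigma\Sigma^\top\bigr)U^\top$ and $F_S = V\bigl(\tfrac{1}{n}\Sigma^\top\Sigma\bigr)V^\top$; by \Cref{lemma:eig} the two matrices share the same nonzero eigenvalues $\mu_i$ via $\sigma_i = \sqrt{n\mu_i}$. Writing $\alpha_{z'}$ in the orthonormal basis $U$ diagonalizes $(JJ^\top+n\lambda I_n)^{-1}$ and gives
\begin{equation*}
    o_{z',\lambda} \;=\; \sum_{i=1}^n \frac{\tilde{\alpha}_{z',i}^2}{n\mu_i + n\lambda}.
\end{equation*}
By \Cref{assumption:alpha-J} together with \Cref{lemma:alpha-J}, $\tilde{\alpha}_{z',i}=0$ whenever $\mu_i=0$, so only indices with $\mu_i>0$ contribute to the sum.

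Next, I would translate $\tilde{\alpha}_{z',i}$ into $\tilde{g}_{z',i}$ using $g_{z'}=\tfrac{1}{n}J^\top\alpha_{z'}=\tfrac{1}{n}V\Sigma^\top U^\top\alpha_{z'}$. Reading off the $i$th coordinate in the basis $V$ (which is the eigenbasis of $F_S$ invoked in \Cref{assumption:bound-g}) gives $\tilde{g}_{z',i} = \tfrac{\sigma_i}{n}\tilde{\alpha}_{z',i} = \sqrt{\mu_i/n}\,\tilde{\alpha}_{z',i}$ for every $i$ with $\mu_i>0$. Inverting this relation and applying \Cref{assumption:bound-g} then yields
\begin{equation*}
    \tilde{\alpha}_{z',i}^2 \;=\; \frac{n}{\mu_i}\,\tilde{g}_{z',i}^2 \;\leq\; C_3\,n\,\mu_i^{\varepsilon_g}.
\end{equation*}

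Finally, I would drop the nonnegative $n\lambda$ in the denominator, substitute the above bound, and note that there are at most $n$ indices with $\mu_i>0$, each satisfying $\mu_i\geq \mu_{\mathrm{min}}$. This produces
\begin{equation*}
    o_{z',\lambda} \;\leq\; \sum_{i:\mu_i>0}\frac{C_3 n\mu_i^{\varepsilon_g}}{n\mu_i} \;=\; C_3\sum_{i:\mu_i>0}\mu_i^{\varepsilon_g-1} \;\leq\; \frac{C_3 n}{\mu_{\mathrm{min}}^{1-\varepsilon_g}},
\end{equation*}
which is the desired bound. The main source of difficulty is not analytic but notational: one must carefully keep track of two distinct eigenbases ($U$ diagonalizing $\tfrac{1}{n}JJ^\top$ versus $V$ diagonalizing $F_S$) and align the index conventions via \Cref{lemma:eig}. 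Once the SVD is in place, the rest is a short one-line estimate, and in particular \Cref{assumption:concentration} is not needed here; it is listed among the hypotheses only because it is required elsewhere in the surrounding analysis of \Cref{eq:lhs-rhs-simp}.
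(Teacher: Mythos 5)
Your proof is correct and follows essentially the same route as the paper's: expand $o_{z',\lambda}$ in the eigenbasis of $\tfrac{1}{n}JJ^\top$, use \Cref{lemma:alpha-J} to kill the components with $\mu_i=0$, convert $\tilde{\alpha}_{z',i}^2$ to $n\tilde{g}_{z',i}^2/\mu_i$ via the SVD, apply \Cref{assumption:bound-g}, and bound each surviving term by $\mu_{\mathrm{min}}^{\varepsilon_g-1}$. Your side remark that \Cref{assumption:concentration} is not actually used in this particular bound also matches the paper's proof, which likewise never invokes it.
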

\begin{proof}
Denote \(w_{z', i}\) as the \(i\)th component of \(\alpha_{z'}\) under the eigenbasis of \(\frac{1}{n}JJ^\top\) for \(i \in [n]\).  We first show that for \(i \in [n]\), if \(\mu_i = 0\) then \(w_{z', i} = 0\); if \(\mu_i \neq 0\), (then \(i \leq p\) by definition of \(\mu_i\)) \(w_{z', i} = \tilde{g}_{z', i} \sqrt{n/\mu_i}\). For $\mu_i = 0$, \Cref{lemma:alpha-J} guarantees that the component of $\alpha_{z'}$ is $0$. For \(\mu_i \neq 0\), with the singular value decomposition of \(J\) in \Cref{lemma:eig},
\[\tilde{g}_{z', i} = (V^\top\frac{1}{n}V^\top \Sigma^\top U^\top \alpha_{z'})_i = \frac{1}{n}\sigma_i w_{z', i} = w_{z', i}\sqrt{\frac{\mu_i}{n}},\]
for \(i \in [\min\{n, p\}]\), which gives the desired result. Therefore,

\begin{align*}
    o_{z', \lambda} = \frac{1}{n}\alpha_{z'}^\top (\frac{1}{n}J^\top J+\lambda I_n)^{-1}\alpha_{z'} = \sum_{i=1}^n \llbracket\mu_i\neq 0\rrbracket\frac{\tilde{g}_{z',i}^2/\mu_i}{\mu_i + \lambda} \leq\sum_{i=1}^n \llbracket\mu_i\neq 0\rrbracket\frac{C_3\mu_i^{\varepsilon_g}}{\mu_i + \lambda} \leq \frac{C_3n}{\mu_\mathrm{min}^{1-{\varepsilon_g}}}.
\end{align*}
\end{proof} 

Finally, by combining all the results, we state the following formal version of \Cref{lemma:lds-monotonicity}.

\begin{lemma}[Formal version of \Cref{lemma:lds-monotonicity}]
    Assume \Cref{assumption:concentration}, \Cref{assumption:bound-g}, and \Cref{assumption:alpha-J} hold. With high probability over the sample of $z'$ from the test distribution, $r_{z', 0^+} := \lim_{\lambda\rightarrow0^+} r_{z', \lambda}$ exists. Further assume $r_{z', 0^+} > 0$. Then if $\mu_{\mathrm{min}} < C_{\mu}$ where $C_{\mu}$ is some positive value depending on $z'$, there exists some $C > 0$ such that for $0 < \lambda < C$,
    \[\dot{c_p}(\lambda; z') > 0.\]
    \label{lemma:formal}
\end{lemma}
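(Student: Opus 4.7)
The plan is to start from the simplified sufficient condition in \Cref{eq:lhs-rhs-simp} and feed into it the upper bounds already established in \Cref{prop:rhs} and \Cref{prop:lhs}. Since the RHS bound $\frac{C_2}{(1-p(z',\ts{S}))\sqrt{C_1}}\cdot\frac{(\mu_{\min}+\lambda)^{3/2}}{\mu_{\min}}$ and the bound $o_{z',\lambda}\le \frac{C_3 n}{\mu_{\min}^{1-\varepsilon_g}}$ both hold with high probability over the draw of $z'$ (under \Cref{assumption:concentration}), it will suffice to produce a matching \emph{lower bound} on $r_{z',\lambda}$ and then compare orders in $\mu_{\min}$ and $\lambda$.

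First I would verify the existence of $r_{z',0^+}$. Decomposing in the eigenbasis of $F_S$ gives
\begin{equation*}
    r_{z',\lambda} \;=\; -\sum_{i:\mu_i\neq 0}\frac{\widetilde{\nabla_\theta f}_i(z',\ts{S})\,\tilde{g}_{z',i}}{\mu_i+\lambda} \;-\; \sum_{i:\mu_i=0}\frac{\widetilde{\nabla_\theta f}_i(z',\ts{S})\,\tilde{g}_{z',i}}{\lambda}.
\end{equation*}
By \Cref{assumption:alpha-J} and the computation of $\tilde{g}_{z',i}$ inside the proof of \Cref{prop:lhs} (which uses \Cref{lemma:alpha-J} to show that $\tilde{g}_{z',i}=0$ whenever $\mu_i=0$), the second sum vanishes, so $r_{z',\lambda}$ is a finite sum of terms continuous in $\lambda$ at $0^+$ and the limit $r_{z',0^+}$ exists. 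By assumption $r_{z',0^+}>0$, so by continuity there is some $C_1'>0$ with $r_{z',\lambda}\ge \tfrac{1}{2}r_{z',0^+}$ for all $0<\lambda<C_1'$.

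Second, I would combine the pieces. Plugging the bound on $o_{z',\lambda}$ into the LHS of \Cref{eq:lhs-rhs-simp} yields
\begin{equation*}
    \frac{r_{z',\lambda}}{\sqrt{o_{z',\lambda}}} \;\ge\; \tfrac{1}{2}\,r_{z',0^+}\cdot\sqrt{\frac{\mu_{\min}^{1-\varepsilon_g}}{C_3 n}}
\end{equation*}
for $0<\lambda<C_1'$, while by \Cref{prop:rhs} the RHS is at most $K\cdot (\mu_{\min}+\lambda)^{3/2}/\mu_{\min}$ where $K$ is the $z'$-dependent constant $\frac{C_2}{(1-p(z',\ts{S}))\sqrt{C_1}}$. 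Thus a sufficient condition for $\dot{c_p}(\lambda;z')>0$ is
\begin{equation*}
    (\mu_{\min}+\lambda)^{3/2} \;<\; \frac{r_{z',0^+}}{2K\sqrt{C_3 n}}\cdot \mu_{\min}^{(3-\varepsilon_g)/2}.
\end{equation*}
Since the right-hand side scales like $\mu_{\min}^{(3-\varepsilon_g)/2}$ while the left-hand side is roughly $\mu_{\min}^{3/2}$ when $\lambda\ll\mu_{\min}$, the ratio of the two sides is $\Theta(\mu_{\min}^{-\varepsilon_g/2})$, which diverges as $\mu_{\min}\to 0$. I would therefore choose $C_\mu$ small enough that the inequality holds at $\lambda=0$ (with room to spare), and then take $C:=\min\{C_1',\,\mu_{\min}\cdot \delta\}$ for a small constant $\delta$ so that the perturbation $(\mu_{\min}+\lambda)^{3/2}-\mu_{\min}^{3/2}$ on $[0,C]$ is absorbed into the slack; this yields the claimed $C$.

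The step I expect to be the main technical obstacle is making the slack argument uniform: the constant $K$ depends on $1/(1-p(z',\ts{S}))$ and on the high-probability event in \Cref{assumption:concentration}, and $r_{z',0^+}$ depends on the geometry between $\nabla_\theta f(z',\ts{S})$ and $g_{z'}$, so the scalar $C_\mu$ has to be taken as a function of $z'$ (as the statement allows). One must be careful that the two high-probability events—the one guaranteeing the RHS bound in \Cref{prop:rhs} and the one giving \Cref{prop:lhs}—are invoked on the same draw of $z'$, but since both follow from \Cref{assumption:concentration} this is just a union bound. Everything else is a matter of collecting constants and checking that $\lambda$ is small enough that $(\mu_{\min}+\lambda)^{3/2}$ does not inflate beyond the slack, which is routine.
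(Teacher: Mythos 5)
Your proposal is correct and follows essentially the same route as the paper's proof: establish existence of $r_{z',0^+}$ via the eigenbasis expansion (the $\mu_i=0$ terms vanish since $\tilde g_{z',i}=0$ there --- note this follows directly from $g_{z'}=\frac{1}{n}J^\top\alpha_{z'}$, or from \Cref{assumption:bound-g}, rather than from \Cref{assumption:alpha-J}), then pit the lower bound $r_{z',\lambda}/\sqrt{o_{z',\lambda}}\ge\tfrac12 r_{z',0^+}\sqrt{\mu_{\min}^{1-\varepsilon_g}/(C_3 n)}$ from \Cref{prop:lhs} against the upper bound of \Cref{prop:rhs} and exploit the $\mu_{\min}^{-\varepsilon_g/2}$ divergence of the ratio. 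The only difference is that the paper solves the resulting inequality exactly to give closed-form expressions for $C_\mu$ and $C$ (allowing $C$ as large as roughly $\mu_{\min}^{1-\varepsilon_g/3}$), whereas you leave the constants implicit and take the more conservative $C=\min\{C_1',\delta\mu_{\min}\}$, which still suffices for the existential claim.
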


\begin{proof}
We first show that with high probability the limit exists. By expanding $r_{z', \lambda} = -\dfzt{S}^\top\gsl{-1}g_{z'}$ under the eigenbasis of $F_S$, with high probability, the absolute value of the summation term corresponding to $\mu_i$ for $i \in [\min\{n, p\}]$ is
$$ |\frac{-\widetilde{\nabla_\theta f}_i(z', \ts{S}) \tilde{g}_{z', i}}{\mu_i+ \lambda}| \leq \frac{\sqrt{C_2 C_3}}{1-p(z', \ts{S})}\frac{\mu_i^{1/2+(1+\varepsilon_g)/2}}{\mu_i+\lambda} \leq \frac{\sqrt{C_2 C_3}}{1-p(z', \ts{S})}\mu_i^{\varepsilon_g/2}. $$
Additionally, for \(i > n\), \(\tilde{g}_{z', i} = 0\) from the singular value decomposition of $J$. As a result, $\lim_{\lambda \rightarrow 0^+} r_{z', \lambda}$ exists. As $r_{z', 0^+} > 0$, there exists $C^* > 0$ such that for all $0 < \lambda < C^*$, $r_{z', \lambda} > \frac{1}{2}r_{z', 0^+}$. Now, let
\[C_{\mu} := (\frac{r_{z', 0^+}(1-p(z', \ts{S}))\sqrt{C_1}}{2nC_2\sqrt{n \cdot C_3}})^{2/\varepsilon_g} > 0,\]
and
\[
C := \min\{C^*, (\frac{r_{z', 0^+}(1-p(z', \ts{S}))\sqrt{C_1}}{2nC_2\sqrt{n \cdot C_3}})^{2/3}\mu_{\mathrm{min}}^{1-\varepsilon_g/3} - \mu_{\mathrm{min}}\}.
\]
By direct calculation, if $\mu_{\mathrm{min}} < C_\mu$, then $C > 0$. Further, when $0 < \lambda < C$,
\[\mathrm{LHS} \geq\frac{r_{z', \lambda}}{\sqrt{n \cdot C_3}}\mu_{\mathrm{min}}^{(1-\varepsilon_g)/2} > \frac{r_{z', 0^+}}{2\sqrt{n \cdot C_3}}\mu_{\mathrm{min}}^{(1-\varepsilon_g)/2} > \frac{nC_2}{(1-p(z', \ts{S}))\sqrt{C_1}}\frac{(\mu_{\mathrm{min}}+\lambda)^{3/2}}{\mu_{\mathrm{min}}} > \mathrm{RHS}, \]
which implies $\dot{c_p}(\lambda; z') > 0$.
\end{proof} 

\begin{remark}
    Our analysis relies on the condition that \(\mu_{\mathrm{min}}\), the smallest non-zero eigenvalue of \(F_S\), remains small. This assumption aligns with established analyses demonstrating eigenvalue concentration near zero for both the FIM and Hessian in deep neural networks near convergence \emph{\citep{sagun2016eigenvalues, karakida2019universal}}. These results empirically justify our treatment of \(\mu_{\mathrm{min}}\). 
\end{remark}

We subsequently focus on a discussion of the positivity condition \(r_{z', 0^+} > 0\) which constitutes the remainder of this section.

\paragraph{Discussion of $r_{z', 0^+} > 0$.} Here we show that, in the special case where $z' = z_i \in S$ for some $i \in [n]$, we have $r_{z', 0^+} > 0$. We write the singular value decomposition $J = U\Sigma V^\top$. Then by \Cref{lemma:eig},
\begin{align*}
    \lim_{\lambda \rightarrow 0^+}r_{z', \lambda} &= \lim_{\lambda \rightarrow 0^+}-\frac{1}{n}\dfzt{S}^\top J^\top(\frac{1}{n}JJ^\top+\lambda I_n)^{-1} \alpha_{z'} \\
    &= \lim_{\lambda \rightarrow 0^+} \frac{1}{n(1-p_i)}\nabla_\theta L(z_i, \ts{S})^\top J^\top(\frac{1}{n}JJ^\top+\lambda I_n)^{-1} \alpha_{z'} \\
    &= \frac{1}{1-p_i} \lim_{\lambda \rightarrow 0^+} e_i^\top \frac{1}{n} J J^\top (\frac{1}{n}JJ^\top+\lambda I_n)^{-1} \alpha_{z'} \\
    &= \frac{1}{1-p_i} \lim_{\lambda \rightarrow 0^+} \sum_{j=1}^n\frac{\mu_j}{\mu_j + \lambda}(U^\top e_i)_j(U^\top\alpha_{z'})_j \\
    &= \frac{1}{1-p_i} \sum_{j=1}^n\llbracket\mu_j \neq 0\rrbracket(U^\top e_i)_j(U^\top\alpha_{z'})_j,
\end{align*}
where $e_i$ is the $i$th standard basis vector. By \Cref{lemma:alpha-J}, $\mu_j = 0$ implies $(U^\top \alpha_{z'})_j = 0$. Therefore,
\begin{align*}
    r_{z', 0^+} = \lim_{\lambda \rightarrow 0^+}r_{z', \lambda} &= \frac{1}{1-p_i}e_i^\top U U^\top \alpha_{z'} = \frac{\alpha_{z',i}}{1-p_i} > 0,
\end{align*}
because $p_i < 1$ and $ \alpha_{z',i} = \alpha_{z_i, i} = \EA[f(z_i, \ts{A})|z_i \in A] - \EA[f(z_i, \ts{A})] $ is the expected change in model output of $z_i$ itself when $z_i$ is included in the training set, which is positive.

\paragraph{Empirical assessment of constants in assumptions.} The results derived above are dependent on the constants $C_1$, $C_2$, and $C_3$ introduced in \Cref{assumption:concentration,assumption:bound-g}. Here we provide a brief discussion on the empirical behavior of these constants, showing that they generally stay around the level of $0.1$ to $10$ instead of scaling with the smallest eigenvalue $\mu_{\mathrm{min}}$. Note that, in this empirical study, we compute $C_1$ with respect to the smallest eigenvalue rather than all eigenvalues, which is reasonable because the only $C_1$ term that occurs in the theoretical derivation is the one related to $\mu_{\mathrm{min}}$. Further, due to computational limitations, we consider projection dimension $4096$. We fix $\varepsilon_g = 0.5$. 

To empirically estimate these constants, we first diagonalize $P^\top F_S P$ to obtain its eigenvalues and execute a data attribution pass with IFFIM to compute $\nabla_\theta L(z', \theta^*_S)$ and $g_{z'}$ for $z'$ in the test dataset. Then, we follow \Cref{eq:assum-c1-c2} and \Cref{eq:assum-c3} to obtain values of $C_1$, $C_2$, and $C_3$ that satisfy these inequalities for all or a given percentage of test examples. For ResNet-9~\citep{he2016deep} on CIFAR-2 dataset \citep{krizhevsky2009learning}, we find that $C_3=0.4082$ is enough for \Cref{eq:assum-c3} to hold on all the test examples, and $C_1=0.3121$ and $C_2=51.9933$ satisfy \Cref{eq:assum-c1-c2} on $90\%$ of the test examples. In comparison, the smallest eigenvalue is only about $10^{-4}$. For MusicTransformer~\citep{anna2018music} on MAESTRO dataset \citep{hawthorne2018enabling}, we find that $C_3=0.0156$ is enough for all test examples, and $90\%$ of test examples yield $C_1=0.0518$ and $C_2=21.5589$. In comparison, the smallest eigenvalue is only about $7\times10^{-5}$. These results suggest that, even in highly non-convex settings, our assumptions tend to hold empirically.

\subsection{Effects of Gradient Projection}
\label{appendix:theory:gradient-projection}

Our derivation does not rely on specific properties of \(J\). When incorporating gradient projection via \Cref{eq:def-iffim-rp}, as outlined in \Cref{remark:gradient-projection}, the projection matrix \(P\) can be systematically applied to all gradient terms. We now analyze how gradient projection affects \(\mu_\mathrm{min}\), the smallest non-zero eigenvalue of \(F_S\) (equivalently, of \(\frac{1}{n}JJ^\top\). Under projection, this eigenvalue corresponds to the smallest non-zero eigenvalue of \(\frac{1}{n}JPP^\top J^\top\).

In practical applications, gradient projection approximately preserves the dominant eigenvalues of \(F_S\). Consequently, when \(F_S\) exhibits vanishingly small (but non-zero) eigenvalues near index \(\min\{n, \tilde{p}\}\), the projected matrix \(\frac{1}{n}JPP^\top J^\top\) retains a comparably small non-zero eigenvalue, validating the critical condition in \Cref{lemma:formal}.

Unlike Arnoldi-based method \citep{schioppa2022scaling} that aims to preserve top eigenvalues by design, we resort to the standard subspace embedding results \citep{woodruff2014sketching} for random projection based methods \citep{choe2024your, park2023trak}: For common choices of $P$ (e.g. Gaussian random projection) and $0 < \varepsilon_\mathrm{rp}, \delta < 1$, if $\tilde{p} = \Theta((n + \ln(1/\delta))\varepsilon_\mathrm{rp}^{-2})$, then with probability $1-\delta$, $P$ satisfies that for any $v \in \mathbb{R}^n$, 
$$ (1-\varepsilon_\mathrm{rp})\|J^\top v\|_2 \leq \|P^\top J^\top v\|_2 \leq (1+\varepsilon_\mathrm{rp})\|J^\top v\|_2. $$
This implies, by min-max theorem,
\[(1-\varepsilon_{\mathrm{rp}})\mu_i(JJ^\top) \leq \mu_i(JPP^\top J^\top) \leq (1+\varepsilon_{\mathrm{rp}})\mu_i(JJ^\top), \]
where $\mu_i(\cdot)$ stands for the $i$th biggest eigenvalue. This guarantees an approximation of top eigenvalues.

\subsection{Proof of Proposition \ref{prop:bounded-in-0-1}}
\label{appendix:theory:proof-prop-bounded}

\begin{proof}[Proof of \Cref{prop:bounded-in-0-1}]
We utilize \Cref{lemma:push-through} to obtain
$$ t_{k, z', \lambda} = \frac{1}{n}\dfzt{S}^\top J^\top(\frac{1}{n}JJ^\top+\lambda I_n)^{-k}J\dfzt{S}, $$
$$ r_{z', \lambda} = -\frac{1}{n}\dfzt{S}^\top J^\top(\frac{1}{n}JJ^\top+\lambda I_n)^{-1}\alpha_{z'}, $$
where $k = 1, 2, 3$. Further,
$$ o_{z', \lambda} = \frac{1}{n}\alpha_{z'}^\top (\frac{1}{n}JJ^\top + \lambda I_n)^{-1}\alpha_{z'}. $$

We point out that LHS of \Cref{eq:lhs-rhs} lying in $[0, 1]$ is the direct result of applying the generalized Cauchy-Schwarz inequality on $-J\nabla_\theta f(z', \ts{S})$ and $\alpha_{z'}$ with inner product defined by positive definite matrix $\frac{1}{n}(\frac{1}{n}JJ^\top + \lambda I_n)^{-1}$, assuming $r_{z', \lambda} > 0$. Similarly, the RHS of \Cref{eq:lhs-rhs} is shown bounded when the inequality is applied on $-(\frac{1}{n}JJ^\top+\lambda I_n)^{-1}J\nabla_\theta f(z', \ts{S})$ and $-J\nabla_\theta f(z', \ts{S})$ with the same inner product. We use the fact that $t_{2, z', \lambda} \geq 0$ as $(\frac{1}{n}JJ^\top + \lambda I_n)^{-2}$ is positive semi-definite.
\end{proof}

\section{Details and Extra Experiments of the Surrogate Indicator}
\label{appendix:si}

\subsection{Visualization of the Average Surrogate Indicator}
\label{appendix:si:plot-average-si}

To provide better insights about the proposed surrogate indicator, we present the curve of \(\bar{\xi}_{T, \lambda}\) as a function of \(\lambda\) in \Cref{fig:heuristic-si}. In all experiment settings, empirically, \(\bar{\xi}_{T, \lambda}\) is a monotonic function of \(\lambda\) with a \textit{transitional phase} near \(\bar{\xi}_{T, \lambda} = 0.5\), indicating a good sensitivity to \(\lambda\) around this value, and supporting our choice of threshold in \Cref{algo:si}.

\begin{figure}[ht]
    \centering
    \includegraphics[width=0.8\linewidth]{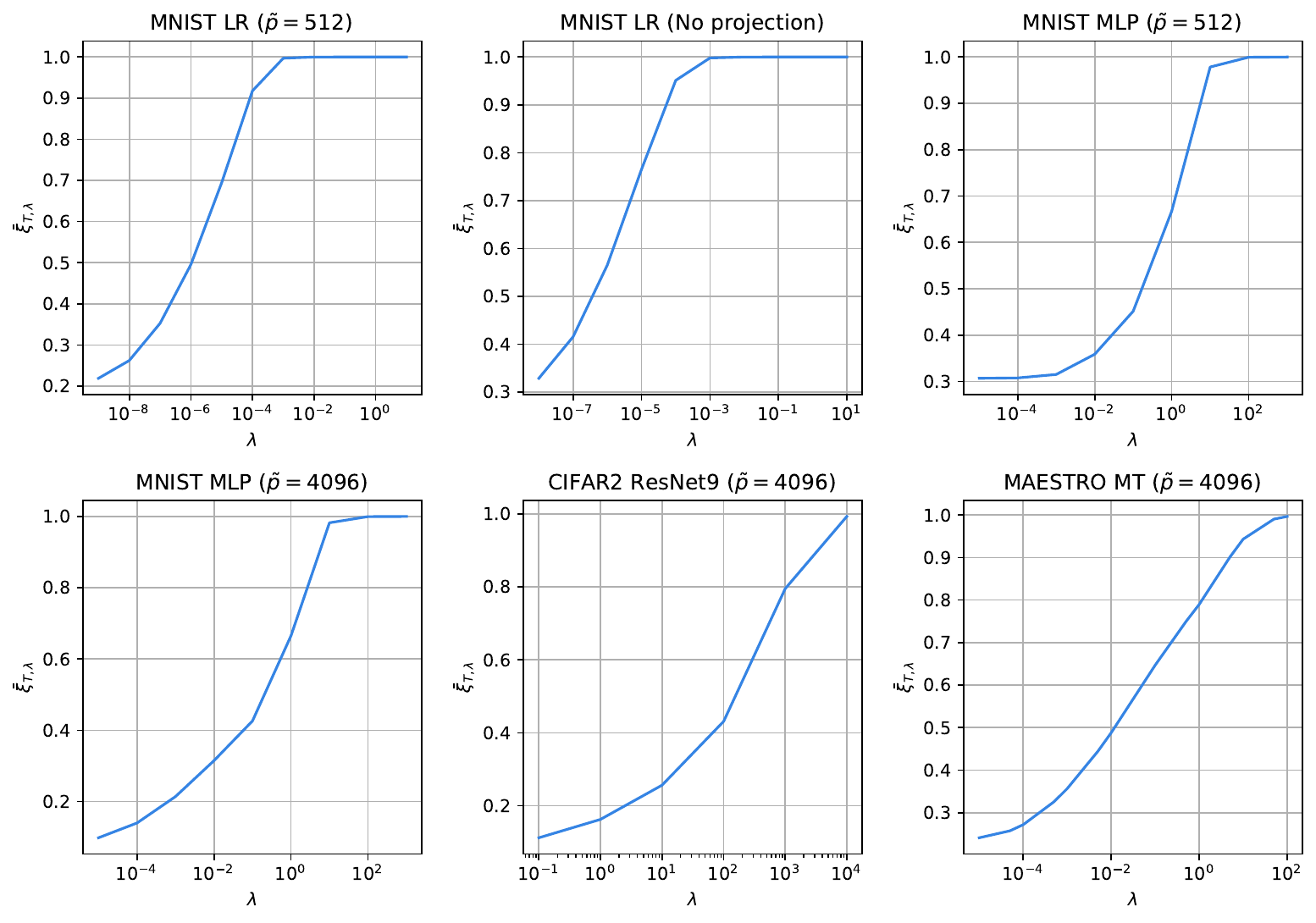}
    \caption{The curve of $\bar{\xi}_{T, \lambda}$ as a function of $\lambda$. Each subfigure corresponds to an experiment setting outlined in \Cref{sec:experiment-si}.}
    \label{fig:heuristic-si}
\end{figure}

\subsection{Computational Resources and Dataset Licenses}
\label{appendix:si:experiment-settings}

The experiments for the surrogate indicator are done on an A40 GPU in around 10 hours, excluding model retraining (we reused some model checkpoints provided by the dattri library to avoid extensive model retraining). For the datasets we use: MNIST-10 dataset holds CC BY-SA 3.0 license; CIFAR-10 dataset holds CC-BY 4.0 license; MAESTRO dataset holds CC BY-NC-SA 4.0 license. 

\subsection{Effects of Different Subset Fractions}
\label{appendix:si:subset-fraction}

We provide an extended analysis of how the subset fraction \(a/|S|\) influences the surrogate indicator, considering values in \(\{0.25, 0.5, 0.75\}\), where \(0.5\) corresponds to the setting used in earlier experiments. 
\Cref{fig:heuristic-frac} illustrates the surrogate indicator’s behavior for each subset fraction, shown in green (\(0.25\)), blue (\(0.5\)), and violet (\(0.75\)). While the magnitude of the LDS varies with the subset fraction, its overall trend as a function of \(\lambda\) remains similar across different values of subset fraction. This consistency suggests that the surrogate indicator is robust to changes in subset size across different experimental settings.

\begin{figure}[t]
    \centering
    \includegraphics[width=0.6\linewidth]{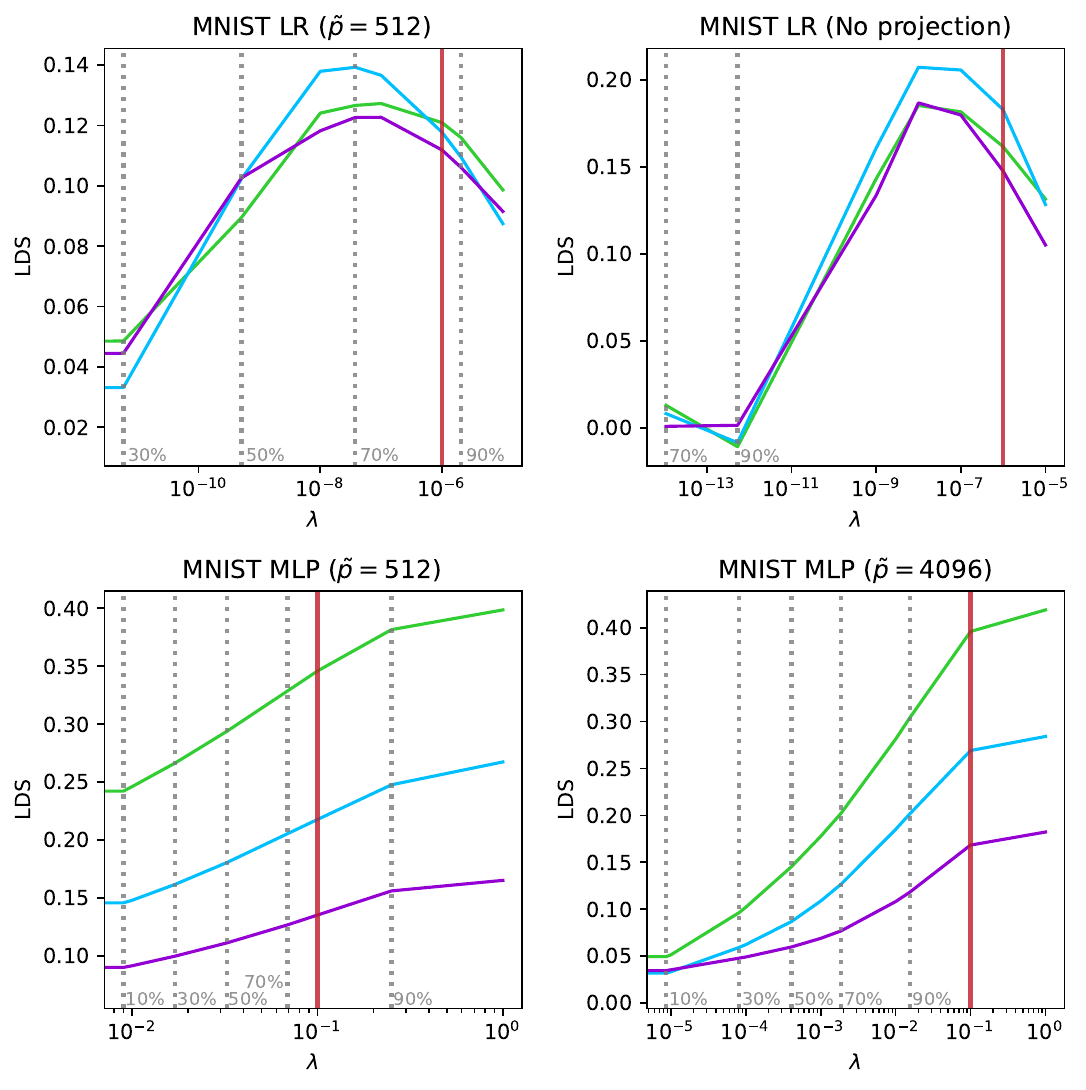}
    \caption{Experiment results with different values of subset fraction. Green, blue, and violet curves illustrate LDS-$\lambda$ relationship with subset fraction $0.25$, $0.5$, and $0.75$, respectively.}
    \label{fig:heuristic-frac}
\end{figure}

\subsection{Additional Experiments on WikiText2 with GPT2}
\label{appendix:si:wikitext2-gpt2}

\begin{figure}[t]
    \centering
    \includegraphics[width=0.8\linewidth]{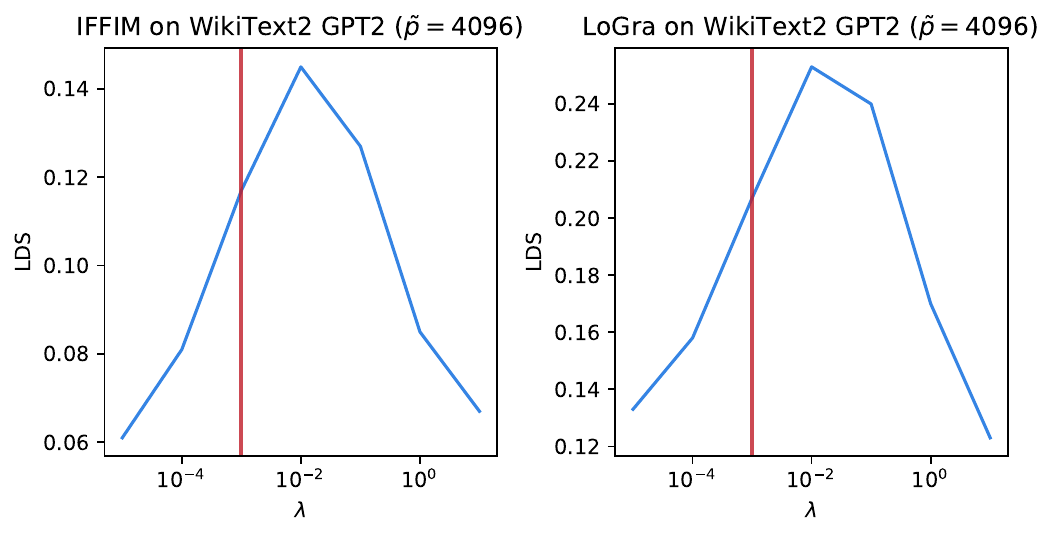}
    \caption{The plot of LDS versus $\lambda$ on the WikiText2 with GPT2 setting. The left plot shows the results with the IFFIM attributor while the right plot demonstrates the results with the LoGra attributor. The red solid vertical line indicates the $\lambda$ selected by our method.}
    \label{fig:heuristic-wikitext2-gpt2}
\end{figure}

Here, we demonstrate additional experiments on the large setting WikiText2~\citep{merity2016pointer} with GPT2~\citep{radford2019language}. We fix projection dimension $4096$, and apply the surrogate indicator to find proper regularization strengths. We investigate both the IFFIM and the LoGra~\citep{choe2024your} attributors. The results are illustrated in \Cref{fig:heuristic-wikitext2-gpt2}. 

\subsection{Experiments on Downstream Settings}
\label{appendix:si:downstream}
We investigate various settings, including Logistic Regression models (LR) on MNIST dataset~\citep{lecun1998gradient}, ResNet-9~\citep{he2016deep} on CIFAR-2~\citep{krizhevsky2009learning} dataset, and MusicTransformer (MT)~\citep{anna2018music} on MAESTRO dataset~\citep{hawthorne2018enabling}. We conduct the experiments with training data removal rate $10\%$, $30\%$, and $50\%$. For methods we use for data removal, we have \textbf{Random} (random removal), \textbf{IFFIM Default} and \textbf{TRAK Default} (IFFIM and TRAK attributors with default regularization $0$), and \textbf{IFFIM Selected} and \textbf{TRAK Selected} (IFFIM and TRAK attributors with selected regularization by our algorithm). For reference, we also include \textbf{Full}, which stands for test performance without removal. We repeat the experiments with $10$ different seeds and report the mean values as well as their standard errors for better statistical significance. The results are presented in \Cref{tab:downstream:mnist}, \Cref{tab:downstream:cifar2}, and~\Cref{tab:downstream:maestro}. We notice that when removing $10\%$ of the training data, most methods lead to insignificant model performance decrease, likely due to redundancy in the datasets. However, we observe that our methods (\textbf{IFFIM Selected} and \textbf{TRAK Selected}) generally outperform baselines (\textbf{Random}, \textbf{IFFIM Default}, and \textbf{TRAK Default}) across different settings and the differences are statistically significant.

\begin{table}[t]
\caption{Data selection performance on MNIST+LR settings. Values in the table are means and standard errors of test accuracies. Experiments are repeated with $10$ different seeds. Lower accuracies indicate better attribution performance.}
\label{tab:downstream:mnist}
\begin{center}
\begin{tabular}{@{}l|c|c|c@{}}
\toprule
Removal rates & 10\% & 30\% & 50\% \\ \midrule
\textbf{Full} & $88.63\% \pm 0.03\%$ & $88.63\% \pm 0.03\%$ & $88.63\% \pm 0.03\%$ \\
\textbf{Random}    & $88.52\% \pm 0.03\%$ & $88.06\% \pm 0.05\%$ & $87.60\% \pm 0.06\%$ \\
\textbf{IFFIM Default}   & $88.44\% \pm 0.05\%$ & $87.77\% \pm 0.08\%$ & $87.58\% \pm 0.10\%$ \\
\textbf{IFFIM Selected}   & $87.66\% \pm 0.04\%$ & $84.50\% \pm 0.04\%$ & $81.53\% \pm 0.06\%$ \\
\textbf{TRAK Default}   & $88.53\% \pm 0.06\%$ & $87.92\% \pm 0.11\%$ & $86.88\% \pm 0.29\%$ \\
\textbf{TRAK Selected}   & $\mathbf{87.30\% \pm 0.04\%}$ & $\mathbf{83.84\% \pm 0.05\%}$ & $\mathbf{80.12\% \pm 0.08\%}$ \\ 
\bottomrule
\end{tabular}
\end{center}
\end{table}

\begin{table}[t]
\caption{Data selection performance on CIFAR-2+ResNet-9 settings. Values in the table are means and standard errors of test accuracies. Experiments are repeated with $10$ different seeds. Lower accuracies indicate better attribution performance.}
\label{tab:downstream:cifar2}
\begin{center}
\begin{tabular}{@{}l|c|c|c@{}}
\toprule
Removal rates & 10\% & 30\% & 50\% \\ \midrule
\textbf{Full} & $75.04\% \pm 0.53\%$ & $75.04\% \pm 0.53\%$ & $75.04\% \pm 0.53\%$ \\
\textbf{Random}    & $74.20\% \pm 0.59\%$ & $71.86\% \pm 0.33\%$ & $68.64\% \pm 0.38\%$ \\
\textbf{IFFIM Default}   & $74.36\% \pm 0.38\%$ & $72.42\% \pm 0.47\%$ & $67.84\% \pm 1.16\%$ \\
\textbf{IFFIM Selected}   & $74.36\% \pm 0.38\%$ & $72.42\% \pm 0.47\%$ & $67.84\% \pm 1.16\%$ \\
\textbf{TRAK Default}   & $74.06\% \pm 0.52\%$ & $71.68\% \pm 0.40\%$ & $69.30\% \pm 0.49\%$ \\
\textbf{TRAK Selected}   & $\mathbf{71.24\% \pm 0.52\%}$ & $\mathbf{64.14\% \pm 0.47\%}$ & $\mathbf{56.30\% \pm 0.70\%}$ \\ 
\bottomrule
\end{tabular}
\end{center}
\end{table}

\begin{table}[t]
\caption{Data selection performance on MAESTRO+MT settings. Values in the table are means and standard errors of test losses. Experiments are repeated with $10$ different seeds. Higher losses indicate better attribution performance. Note that we only include TRAK here, as IF has been shown to exhibit poor attribution performance in this complex setting~\citep{deng2024dattri}.}
\label{tab:downstream:maestro}
\begin{center}
\begin{tabular}{@{}l|c|c|c@{}}
\toprule
Removal rates & 10\% & 30\% & 50\% \\ \midrule
\textbf{Full} & $4.30 \pm 0.01$ & $4.30 \pm 0.01$ & $4.30 \pm 0.01$ \\
\textbf{Random}    & $4.32 \pm 0.01$ & $4.38 \pm 0.01$ & $4.67 \pm 0.02$ \\
\textbf{TRAK Default}   & $4.35 \pm 0.01$ & $4.40 \pm 0.01$ & $4.68 \pm 0.01$ \\
\textbf{TRAK Selected}   & $\mathbf{4.40 \pm 0.01}$ & $\mathbf{4.52 \pm 0.01}$ & $\mathbf{4.83 \pm 0.02}$ \\
\bottomrule
\end{tabular}
\end{center}
\end{table}

\section{Code Availability}

Our code is publicly available at \url{https://github.com/TRAIS-Lab/data-attribution-hp}.

\end{document}